\documentclass[letterpaper,journal]{IEEEtran}

\usepackage{cite}
\usepackage{amsmath,amssymb,amsfonts}
\usepackage{amsthm}
\usepackage{aliascnt}
\usepackage{array}
\usepackage{graphicx}
\usepackage{textcomp}
\usepackage{xcolor}
\usepackage[hidelinks]{hyperref}
\usepackage{booktabs}
\usepackage{orcidlink}
\usepackage{algorithm}
\usepackage{algorithmic}
\usepackage{tikz}
\usetikzlibrary{arrows.meta, positioning, shapes.symbols, shadows.blur}

\graphicspath{{figures/}}

\definecolor{techblue}{HTML}{1A5F7A}
\definecolor{techbg}{HTML}{F0F8FF}
\definecolor{techgray}{HTML}{4A5568}

\newtheorem{theorem}{Theorem}[section]

\newaliascnt{lemma}{theorem}
\newtheorem{lemma}[lemma]{Lemma}
\aliascntresetthe{lemma}

\newaliascnt{corollary}{theorem}
\newtheorem{corollary}[corollary]{Corollary}
\aliascntresetthe{corollary}

\newaliascnt{definition}{theorem}
\newtheorem{definition}[definition]{Definition}
\aliascntresetthe{definition}

\newaliascnt{proposition}{theorem}
\newtheorem{proposition}[proposition]{Proposition}
\aliascntresetthe{proposition}

\newaliascnt{assumption}{theorem}
\newtheorem{assumption}[assumption]{Assumption}
\aliascntresetthe{assumption}

\newaliascnt{remark}{theorem}
\newtheorem{remark}[remark]{Remark}
\aliascntresetthe{remark}

\newif\ifforReview
\forReviewtrue
\newif\ifrevision\revisionfalse

\begin{document}

\title{Conflict-Predictive Variable Horizons in Multi-Drone Distributed Model Predictive Control}

\author{
    Linda~M\"{u}mken\orcidlink{0009-0005-2646-9397},
    Michael~Schwung\orcidlink{0000-0002-8420-737X},
    Stefan~Lier\orcidlink{0000-0002-3314-7610},
    and~Andreas~Schwung\orcidlink{0000-0001-8405-0977}
    \thanks{The research project was funded by ``The Ministry of the Environment,
        Nature Conservation and Transport of the State of North Rhine-Westphalia''
        in Germany and co-financed by the European Union for the research project
        ``SIDDA -- Sustainable Intermodal Drone Delivery Airline'' with grant
        number IN-ML-1-013b.}
    \thanks{L. M\"{u}mken and A. Schwung are with the Department of Automation
        Technology, South Westphalia University of Applied Sciences, 59494 Soest,
        Germany (e-mail: muemken.linda@fh-swf.de; schwung.andreas@fh-swf.de).}
    \thanks{M. Schwung is with the Institute of Automation and Computer Control,
        Ruhr University Bochum, 44801 Bochum, Germany
        (e-mail: michael.schwung@rub.de).}
    \thanks{S. Lier is with the Department of Logistics and Supply Chain
        Management, South Westphalia University of Applied Sciences, 59872
        Meschede, Germany (e-mail: lier.stefan@fh-swf.de).}
}

\maketitle

\begin{abstract}
    In distributed model predictive control for multi-drone collision avoidance, a fixed prediction horizon forces a compromise: a short horizon is inexpensive but reacts late to approaching neighbors, whereas a long one anticipates conflicts at a per-step cost that grows superlinearly with its length. We propose a conflict-predictive variable horizon that each drone sets locally, leaving the distributed model predictive control itself unchanged. From a short history of observed positions, a drone extrapolates the flight lines of its neighbors, tests each against its own using confidence funnels that narrow with prediction range, and obtains each time to conflict in closed form. The horizon is then the smallest admissible value whose planning window covers the farthest predicted conflict. It collapses to its minimum in clear airspace and grows only when a conflict lies ahead. Provided this minimum meets a single computable feasibility bound, we prove that recursive feasibility and asymptotic stability are preserved for every horizon the policy can select. These guarantees hold for a linear model, and a cascaded inner loop reduces each quadrotor's translational dynamics to a perturbed double integrator, so they carry over to the linearized quadrotor model and, as practical stability, to the full nonlinear one. In simulation on dense antipodal-swap benchmarks, the variable horizon reduces both per-step solver cost and total computation well below those of a long fixed horizon, and it maintains separation in every run, which a short fixed horizon of comparable per-step cost does not.
\end{abstract}

\begin{IEEEkeywords}
Model predictive control, variable prediction horizon, collision avoidance, distributed control
\end{IEEEkeywords}

\section{Introduction}\label{sec:intro}

    Low-altitude airspace is becoming increasingly crowded with autonomous drones used for delivery, inspection, and monitoring, and traffic management frameworks anticipate dense beyond-visual-line-of-sight operations in shared airspace volumes~\cite{sesar_uspace_conops, kopardekar2016utm}. Distributed model predictive control (DMPC) is a natural fit for such traffic, because each drone repeatedly optimizes its own trajectory over a finite prediction horizon from the observed states of its neighbors and enforces separation as a constraint~\cite{luis2020dmpc_drones, stomberg2022dmpc_compendium}. The prediction horizon is the pivotal parameter. It must reach far enough ahead to resolve an approaching conflict before it becomes unavoidable, yet the per-step optimization cost grows superlinearly with its length. Conflicts, however, are intermittent and spatially sparse. A drone spends most of its mission in clear airspace and encounters others only briefly, yet a fixed horizon must be sized for the worst case, so its cost falls on every step even when no neighbor is near.
    This paper aims to design a controller that only pays for anticipation when it predicts an impending conflict, ensuring that its safety does not depend on the accuracy of the prediction.

    So far, the horizon has been treated as a constant, sized once for the operating density and then paid for at every step~\cite{drones10020139, adaptive_dmpc_2026}. Dynamic-horizon schemes do exist, but none sizes the horizon to a predicted conflict while keeping the feasibility and stability of the underlying DMPC independent of that prediction (\autoref{sec:related}).

    We close this gap with a conflict-predictive variable horizon that each drone sets locally. At every step, a lightweight linear predictor extrapolates each neighbor's flight line from a short observation history, and the drone tests that line against its own using a confidence funnel, a tolerance tube that narrows as prediction confidence decays. One closed-form evaluation per neighbor yields the time to conflict, and the horizon becomes the smallest admissible value whose planning window covers the farthest of these times. In clear airspace it stays at its minimum $H_{\min}$, and it reaches its maximum $H_{\max}$ only when a conflict is flagged at the far end of the window, where the funnel is narrowest and only a close pass still counts as an overlap.

    The resulting controller decouples its guarantees from the conflict prediction that drives it. We prove that feasibility and asymptotic stability hold for every horizon the policy can select, provided $H_{\min}$ meets the computable feasibility bound $H_{\min}^{\mathrm{feas}}$, all without modifying the distributed controller itself. This rests on a property we establish in the analysis. The horizon enters only through a finite-horizon approximation residual, which we show stays uniformly bounded over $[H_{\min}, H_{\max}]$ and is absorbed by a certificate depending solely on the drone's physical state. A misprediction therefore blunts anticipation and costs efficiency, never safety. To our knowledge, no other dynamic-horizon scheme decouples its guarantees from the adaptation signal in this way. The same perturbation argument carries the guarantees to linearized quadrotor dynamics and yields practical stability for the full nonlinear model.

We make the following contributions:
\begin{itemize}
    \item We introduce a variable-horizon DMPC that preserves recursive feasibility and asymptotic stability for every admissible horizon, under one computable condition on $H_{\min}$.
    \item We extend these guarantees to quadrotor dynamics, asymptotic for the linearized model and practical for the full nonlinear one.
    \item We show on dense antipodal-swap benchmarks that the variable horizon undercuts a long fixed horizon in per-step and total computation, completes swaps it cannot, and holds separation where a short fixed horizon of comparable cost fails.
\end{itemize}

    The remainder of the paper is organized as follows. \autoref{sec:related} reviews related work on horizon adaptation and distributed MPC for collision avoidance. \autoref{sec:framework} presents the conflict-predictive variable-horizon controller. \autoref{sec:theory} establishes recursive feasibility and asymptotic stability under the variable horizon. \autoref{sec:extended_models} extends the guarantees to linearized and nonlinear quadrotor dynamics. \autoref{sec:evaluation} reports the experimental evaluation, and \autoref{sec:conclusion} concludes.

\section{Related Work}\label{sec:related}
    We review the two lines of work on which our scheme builds and set out where it departs from each. The first covers online adaptation of the prediction horizon and the neighbor-motion prediction that feeds our policy. The second covers distributed MPC for multi-drone collision avoidance.

\subsection{Variable Prediction Horizons}\label{ssec:rw_horizons}
    The prediction horizon fixes both the reach and the per-step cost of the optimization. Many methods adapt it online. What drives that adaptation is what separates these methods from our horizon policy.

    One option is to treat the horizon length as a decision variable of the optimization itself~\cite{michalska1993robust, richards2006variablehorizon}. This couples the horizon to the objective and typically turns the per-step problem into a non-convex or mixed-integer program, which inflates the very computation the adaptation is meant to save. A less invasive alternative keeps the horizon outside the optimization and adapts it through a stabilizing heuristic, growing or contracting it against a control-Lyapunov or terminal-set criterion on the controlled system's own state~\cite{krener2018ahmpc, sun2019selftriggered_adaptivehorizon}. A third line avoids adaptation altogether and fixes one long horizon that approximates the infinite-horizon cost~\cite{ROSTAMI2023100881}, and by construction pays for that horizon at every step. Where these schemes adapt, they size the horizon to a property of the controlled system itself rather than, as in our work, to the geometry of an approaching neighbor, and they contract it toward the goal where ours expands it toward a conflict.

    Data-driven schemes go further and learn the horizon, mapping the current state to a length that trades tracking against computation~\cite{bohn2021rlhorizon}. In the multi-robot case, a learned policy sets a per-robot horizon, and an on-demand rule preserves the avoidance constraints when that horizon shrinks below the time to a detected conflict~\cite{gupta2023vodca}. Our policy reverses the direction of this coupling. A conflict prediction is not a correction applied to a horizon set elsewhere but the input that raises the horizon until the planning window reaches the encounter, and the safety argument never depends on that prediction being correct. A learned policy also carries no closed-form guarantee, so feasibility and stability must be argued separately and the training repeated per task and per density.

    The most closely related methods are event- and self-triggered schemes, which only recompute the control when a state or error condition demands it~\cite{GRAFE202279, ma2021etdmpc_varhorizon, bezier_trajectory}. In these methods, the trigger determines when to recompute and which subsystem performs the computation. If a horizon adapts, it contracts toward a terminal set. In contrast, our policy determines how far to look ahead and expands the horizon just as far as the time to the farthest predicted encounter requires and contracts it again once the airspace is clear.

    Across these lines of work, the horizon is driven by the controlled system's own state while ours is driven by a predicted encounter. It enters the stability analysis only through a residual that stays uniformly bounded over the admissible band.

    To anticipate conflicts, a controller needs a model of where its neighbors will go. The traditional approach is constant-velocity extrapolation, which is employed in velocity obstacles and their reciprocal variants to project each drone along its current heading~\cite{fiorini1998velocity, vandenberg2011rvo}. More sophisticated predictors offer better anticipation at a higher cost. Gaussian process and learned motion forecasts capture maneuvering neighbors~\cite{olcay2024dynamic}, and chance-constrained stochastic MPC pushes the predicted uncertainty into the separation constraint to enforce a probabilistic safety margin~\cite{yoshikawa2023chance}. In all of these, forecast accuracy is safety-critical, since an inaccurate forecast enters the avoidance constraint directly and can void the separation it is meant to enforce. We take the opposite stance and use a deliberately lightweight linear predictor wrapped in confidence funnels. Because feasibility and stability rest on a prediction-independent lower bound on the horizon, a misprediction degrades anticipation only. The funnel construction pursues the trajectory-prediction direction left open by the framework build from~\cite{adaptive_dmpc_2026}, and it admits a learned predictor without affecting the guarantees.

\subsection{Distributed MPC for Collision Avoidance}\label{ssec:rw_dmpc}

    Distributed MPC solves the coupled multi-drone problem by letting each drone optimize its own trajectory based on its neighbors' latest plans~\cite{kuwata2007distributed, luis2020dmpc_drones, shorinwa2023dmpc}, commonly through dual decomposition or the Alternating Direction Method of Multipliers (ADMM) with Gauss--Seidel sweeps~\cite{stomberg2022dmpc_compendium, bertsekas1989parallel}. Stability without terminal ingredients is then recovered either by contraction and self-organization arguments~\cite{koehler2022sequential_dmpc} or by the long-horizon route described above. Set-based safety encodings, such as control barrier functions and their distributed and learned multi-agent extensions~\cite{ames2017cbf, zeng2021mpccbf, zhang2025gcbf}, enforce separation through a forward-invariant set rather than a receding-horizon constraint, and are complementary to the horizon question studied here.

    We build on two foundations, namely the velocity-dependent safety sphere and the sphere-packing bounds that cap the attainable drone density resulting from the geometric and control-theoretic capacity analysis in~\cite{drones10020139}. The adaptive safety zone DMPC, together with its fixed-horizon feasibility and stability guarantees, comes from the framework in~\cite{adaptive_dmpc_2026}, which combines a contraction argument with a Lyapunov function based on physical energy and an asynchronous Gauss--Seidel iteration. Our work turns the horizon, the central cost parameter of this framework, over to an online conflict-predictive policy. The step that makes this admissible is our contribution. We show that the single horizon-dependent residual in the analysis is uniformly bounded over the admissible band, so the certificate survives every horizon the policy selects. We then extend the resulting guarantees beyond the double integrator to linearized and full nonlinear quadrotor dynamics. The adaptive radius and the underlying formulation carry over unchanged.

\section{Conflict-Predictive Variable-Horizon Control}\label{sec:framework}

    Consider $N$ drones sharing a bounded airspace $\Omega \subset \mathbb{R}^3$, indexed by $i \in \{1, \ldots, N\}$, each steering toward a goal $\bar{\mathbf{p}}_i \in \Omega$. Each drone observes time-stamped positions of nearby drones through onboard sensing or broadcast and stores them in a short rolling history. All neighbor quantities, including velocities and adaptive radii, are computed from these positions. The DMPC enforces collision avoidance through velocity-dependent safety zones and pairwise separation constraints. Its central parameter is the prediction horizon $H$, which carries the short-versus-long trade-off of \autoref{sec:intro}. A fixed horizon commits to one side of that trade-off for the entire mission. \autoref{fig:overview} shows the proposed scheme across the drones, with the local loop for every drone. The problem statement below specifies the goals for the structure.

    \begin{figure}[t]
        \centering
        \scalebox{0.95}{\input{figures/fig-framework}}
        \caption{Proposed scheme on $N$ drones sharing the airspace $\Omega$. Each drone observes the neighbor positions $\mathbf{p}_j$ and runs the same local loop. The predictor $p_T$ turns the observations into flight lines $\hat{\mathbf{p}}_j$, the policy $H_p$ sets the horizon $H_i$ from the funnel conflict test, and the DMPC turns the state $\mathbf{x}_i$ into the control input $\mathbf{u}_i$. The horizon is the only quantity that the scheme changes in the otherwise unchanged DMPC. All signals are taken at step $k$.}
        \label{fig:overview}
    \end{figure}

    \noindent\textbf{Problem.}
    Design, for each drone $i$, a control law $\{\mathbf{u}_i(k)\}$ and a per-step prediction horizon $H_i(k)$ such that:
    \begin{enumerate}
        \item[\textbf{(S)}] \emph{Safety:} the pairwise separation $\|\mathbf{p}_i(t) - \mathbf{p}_j(t)\| \geq r_i(t) + r_j(t)$ and the airspace constraint $B_{r_i(t)}(\mathbf{p}_i(t)) \subset \Omega$, with $B_r(\mathbf{p})$ the closed ball of radius $r$ centered at $\mathbf{p}$, hold for all $i \neq j$ and $t \geq 0$.
        \item[\textbf{(St)}] \emph{Stability:} each drone converges to its goal, $\|\mathbf{p}_i(t) - \bar{\mathbf{p}}_i\| \to 0$ as $t \to \infty$.
        \item[\textbf{(E)}] \emph{Efficiency:} the horizon, and with it the per-step optimization cost, is minimal among the admissible horizons whose planning window covers every predicted conflict. In particular, $H_i(k) = H_{\min}$ whenever no conflict is predicted.
    \end{enumerate}

    The controller is distributed by design. Each drone solves its own problem based on the observed states of its neighbors, so the three requirements must be met locally, without central coordination. The central difficulty lies in reconciling \textbf{(E)} with \textbf{(S)} and \textbf{(St)}. A short horizon reduces computation but may prevent the optimizer from resolving an approaching conflict within its planning window. A long horizon preserves safety but sacrifices the efficiency gain. A reactively chosen horizon must therefore never fall below the value that keeps the controller feasible and stable, and it must still collapse to a small value whenever the airspace around the drone is clear.

    We resolve this by separating the two roles of the horizon. A fixed lower bound $H_{\min}$, verified once before deployment, carries feasibility and stability. The conflict prediction then selects a horizon within the admissible band $[H_{\min}, H_{\max}]$, whose upper bound caps the worst-case per-step computation. The prediction therefore governs anticipation alone and cannot weaken the guarantees. A horizon resting at the lower bound remains safe, not because the airspace is clear, but because $H_{\min}$ itself meets the feasibility bound.

    \autoref{ssec:preliminaries} recalls the dynamics, the adaptive safety zone, and the DMPC formulation on which the scheme builds. \autoref{sec:theory} then establishes the guarantees of the closed loop and condenses them into \autoref{thm:var_guarantees}.

\subsection{Preliminaries}\label{ssec:preliminaries}

\subsubsection*{Dynamics}
Following the inner-loop time-scale separation of a cascaded quadrotor controller~\cite{mahony2012multirotor, khalil_nonlinear_2002}, the translational motion of each drone reduces to a double integrator with state $\mathbf{x}_i = (\mathbf{p}_i, \mathbf{v}_i) \in \mathbb{R}^3 \times \mathbb{R}^3$. With sampling time $\Delta t > 0$, the discretized dynamics over a horizon step are
\begin{align}\label{eq:dynamics}
    \mathbf{p}_i(s+1) &= \mathbf{p}_i(s) + \Delta t\, \mathbf{v}_i(s) + \tfrac{1}{2}\Delta t^2\, \mathbf{u}_i(s), \\
    \mathbf{v}_i(s+1) &= \mathbf{v}_i(s) + \Delta t\, \mathbf{u}_i(s), \nonumber
\end{align}
subject to the acceleration and speed limits $\|\mathbf{u}_i(s)\| \leq U_{\max}$ and $\|\mathbf{v}_i(s)\| \leq V_{\max}$, where $U_{\max}$ represents the physical propulsion limits. The extension of all results to the linearized and to the full nonlinear rigid-body quadrotor model is established in \autoref{sec:extended_models}.

\subsubsection*{Adaptive safety zone}
Each drone occupies a closed safety sphere $B_{r_i(t)}(\mathbf{p}_i)$ whose radius scales with the braking distance~\cite{drones10020139},
\begin{align}\label{eq:adaptive_radius}
    r_i(\mathbf{v}_i) = r_{\min} + \alpha\, \frac{\|\mathbf{v}_i\|^2}{2 U_{\max}},
\end{align}
with base radius $r_{\min} > 0$ and adaptation parameter $\alpha \in (0,1]$. The radius is tight at low speed and expands at high speed, so that slower drones occupy less airspace. Safety requires the pairwise separation
\begin{align}\label{eq:separation}
    \|\mathbf{p}_i(t) - \mathbf{p}_j(t)\| \geq r_i(t) + r_j(t), \quad \forall\, i \neq j,\ \forall\, t.
\end{align}

\subsubsection*{Distributed model predictive control}
At each decision time $k\Delta t$, with $s \triangleq k + h$ for $h \in \{0, \ldots, H-1\}$, drone $i$ solves its own finite-horizon problem, minimizing $\sum_{h=0}^{H-1}[\ell(\mathbf{x}_i(s), \mathbf{u}_i(s)) + \lambda\|\mathbf{v}_i(s)\|^2]$ subject to the dynamics~\eqref{eq:dynamics}, the actuation and speed bounds, the airspace constraint $B_{r_i(s)}(\mathbf{p}_i(s)) \subset \Omega$, and the separation constraint~\eqref{eq:separation} in which the neighbor positions are the predicted trajectories $\hat{\mathbf{p}}_j$, $j \in \mathcal{N}_i$. The stage cost $\ell$ penalizes goal deviation and control effort, and $\lambda > 0$ weights a velocity penalty that shrinks safety zones in congested regions. The coupled problem over all drones is solved by asynchronous ADMM--Gauss--Seidel iteration~\cite{stomberg2022dmpc_compendium, bertsekas1989parallel}, each drone optimizing in turn from the predicted lines $\hat{\mathbf{p}}_j$ it has formed from its own observations. This DMPC formulation and its fixed-horizon feasibility and stability guarantees have been presented in~\cite{adaptive_dmpc_2026}. We now extend the results by adapting the horizon $H$.

\subsection{Neighbor History and Linear Prediction}\label{ssec:prediction}

Each drone $i$ maintains, for every observed neighbor $j$, a rolling history of its measured positions over the most recent $L$ samples,
\begin{align}\label{eq:history}
    \mathbf{p}_j(k{-}L{+}1{:}k) \triangleq \big(\mathbf{p}_j(k{-}L{+}1), \ldots, \mathbf{p}_j(k)\big).
\end{align}
A first-order least-squares fit of $\mathbf{p}_j(k{-}L{+}1{:}k)$ yields the current position estimate $\mathbf{p}_j$ and a constant-velocity flight line
\begin{align}\label{eq:linear_pred}
    \hat{\mathbf{p}}_j(\tau) = \mathbf{p}_j + \tau\, \tilde{\mathbf{v}}_j, \qquad \tau \in [0, t_{\max}],
\end{align}
over the prediction window $t_{\max} = H_{\max}\Delta t$, where $H_{\max}$ is the maximum admissible horizon introduced in \autoref{ssec:horizon}. The predicted velocity inherits its direction and speed from the fitted line, with the speed bounded below at a fraction of the speed limit,
\begin{align}\label{eq:speed_floor}
    \tilde{\mathbf{v}}_j = \hat{\mathbf{d}}_j \cdot \max\!\big(\hat{v}_j,\ \nu\, V_{\max}\big), \qquad \nu \in (0, 1],
\end{align}
with $\hat{\mathbf{d}}_j$ the unit direction and $\hat{v}_j$ the fitted speed. No velocity measurement of the neighbor is needed. The lower bound $\nu V_{\max}$ keeps the prediction conservative when a neighbor is momentarily slow or at rest, so that latent conflicts are not masked by a vanishing relative velocity.
The ego drone's own line $\hat{\mathbf{p}}_i(\tau) = \mathbf{p}_i + \tau\, \tilde{\mathbf{v}}_i$ is formed analogously, with $\hat{\mathbf{d}}_i$ taken from its reference toward $\bar{\mathbf{p}}_i$. The history length $L$ is a design parameter. The present implementation uses $L = 50$ and computes the fit~\eqref{eq:history}--\eqref{eq:linear_pred} from the first observed position onward. The extrapolation is deliberately lightweight, requiring no model of a neighbor's controller and admitting the closed-form conflict test of \autoref{ssec:detection}. The ego line uses the reference direction rather than the drone's own previously planned MPC trajectory as it spans only the previous horizon. Coordination therefore rests on observed positions alone: no drone transmits its planned trajectory or goal, and its mission intent stays private.

\subsection{Confidence Funnels}\label{ssec:funnel}

Linear extrapolation is reliable in the near term but degrades as $\tau$ grows, because neighbors maneuver and the constant-velocity assumption decays. We therefore propose to surround each predicted flight line with a confidence funnel, a tube that narrows with prediction time and whose radius sets the conflict-test tolerance. The funnel weighs the uncertainty of the predicted trajectory against the criticality of that uncertainty. Close to the current time stamp, the prediction is reliable and an encounter leaves little time to react, so the tolerance is wide. Far ahead, the prediction is uncertain, but this uncertainty is benign, because the encounter is distant and the test is repeated at every step, so only a severe predicted geometry flags a conflict. Let the progress along the prediction window be $\sigma(\tau) = \tau / t_{\max} \in [0,1]$ and let the initial radius at the funnel's wide end be the maximum safety radius $r_{\max} = r_{\min} + \alpha V_{\max}^2/(2U_{\max})$, i.e., the largest value the adaptive radius $r_i(\mathbf{v}_i)$ attains.

\begin{definition}[Confidence funnel radius]\label{def:funnel}
The confidence funnel radius along a predicted flight line is
\begin{align}\label{eq:funnel}
    \psi_{ij}(\tau) = r_{\mathrm{f}} + \big(r_{\max} - r_{\mathrm{f}}\big)\, \exp\!\Big(\!-\frac{\sigma(\tau)}{c_{\mathrm{d}}}\Big),
\end{align}
where the funnel lower bound $r_{\mathrm{f}} = r_j(\hat{v}_j)$ is the neighbor's current adaptive radius~\eqref{eq:adaptive_radius} evaluated at the fitted speed, and $c_{\mathrm{d}} > 0$ is the decay constant.
\end{definition}

The funnel forms a cone that narrows along the neighbor's flight line, so the tolerance is wide and conservative near the present and tight and selective far ahead, as illustrated in \autoref{fig:funnel}. The lower bound depends on the neighbor's state: a slow neighbor induces a tight far-future tolerance, whereas a fast neighbor, whose radius approaches $r_{\max}$, retains a nearly uniform tube. The decay constant $c_{\mathrm{d}}$ trades sensitivity against false alarms: small $c_{\mathrm{d}}$ discounts distant predictions aggressively, whereas large $c_{\mathrm{d}}$ keeps the tube wide across the whole window.

\begin{figure}[t]
    \centering
    \includegraphics[width=0.8\columnwidth]{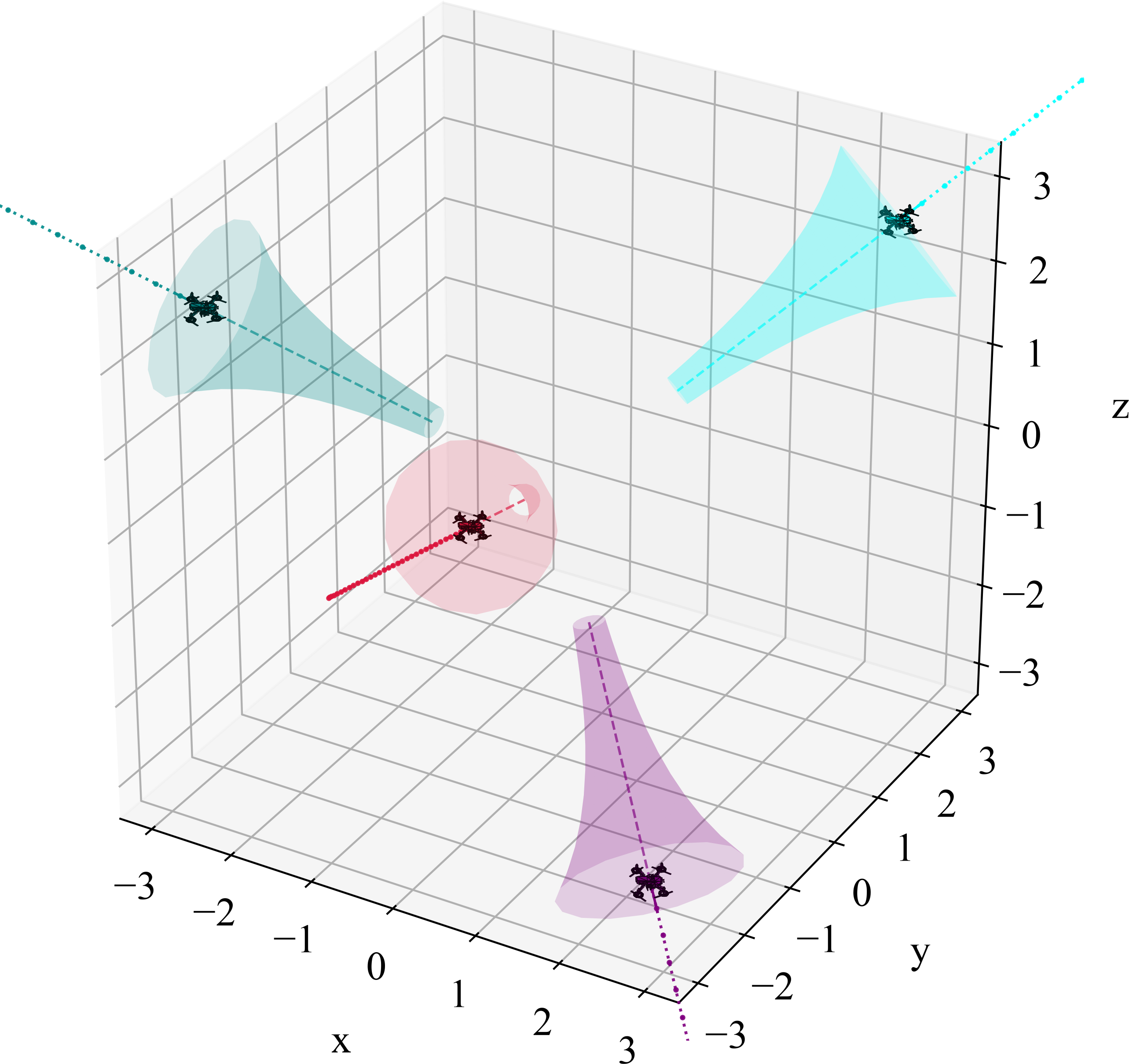}
    \caption{Confidence funnels in a four-drone swap (3D, color-coded). Each funnel runs along a drone's predicted flight line (dashed), starting at the initial radius $r_{\max}$ and narrowing to the lower bound $r_{\mathrm{f}}$ (\autoref{def:funnel}). Dotted lines trace the paths already flown. The funnels shown are those predicted by a single drone, but as the simulation shares positions by broadcast, every drone observes the same histories and predicts the same funnels.}
    \label{fig:funnel}
\end{figure}

\subsection{Conflict Detection}\label{ssec:detection}

Given the two linear predictions, the conflict test reduces to a single closed-form evaluation per neighbor. Let $\Delta\mathbf{p}_{ij} = \mathbf{p}_i - \mathbf{p}_j$ and $\Delta\mathbf{v}_{ij} = \tilde{\mathbf{v}}_i - \tilde{\mathbf{v}}_j$ denote the relative position and velocity. The time of closest approach within the prediction window minimizes $\|\Delta\mathbf{p}_{ij} + \tau\,\Delta\mathbf{v}_{ij}\|$ and is obtained in closed form by clamping the unconstrained minimizer to $[0, t_{\max}]$,
\begin{align}\label{eq:tca}
    \tau^{*}_{ij} =
    \begin{cases}
        \operatorname{clamp}\!\Big(\!-\dfrac{\Delta\mathbf{p}_{ij}^{\top}\Delta\mathbf{v}_{ij}}{\|\Delta\mathbf{v}_{ij}\|^{2}},\, 0,\, t_{\max}\Big), & \|\Delta\mathbf{v}_{ij}\| > \epsilon_0, \\[4pt]
        0, & \text{otherwise,}
    \end{cases}
\end{align}
where $\epsilon_0 > 0$ guards against the singularity of parallel motion. The closed form results from the classical closest-point-of-approach test of the velocity-obstacle literature~\cite{fiorini1998velocity}. The corresponding miss distance, or gap, is
\begin{align}\label{eq:gap}
    g_{ij} = \big\|\Delta\mathbf{p}_{ij} + \tau^{*}_{ij}\,\Delta\mathbf{v}_{ij}\big\|.
\end{align}

\begin{definition}[Predicted conflict]\label{def:conflict}
Neighbor $j$ is in predicted conflict with drone $i$ when the gap at closest approach lies within the funnel evaluated at that instant,
\begin{align}\label{eq:conflict}
    g_{ij} \leq \psi_{ij}\big(\tau^{*}_{ij}\big).
\end{align}
The set of conflicting neighbors of drone $i$ is $\mathcal{C}_i = \{\, j \in \mathcal{N}_i : g_{ij} \leq \psi_{ij}(\tau^{*}_{ij}) \,\}$.
\end{definition}

The test in~\eqref{eq:conflict} is permissive for imminent encounters and increasingly demanding for distant ones, exactly matching the confidence of the linear prediction. This behavior admits a precise statement.

\begin{lemma}[Funnel envelope]\label{lem:funnel_envelope}
For all $\tau \in [0, t_{\max}]$, the funnel radius~\eqref{eq:funnel} is nonincreasing in $\tau$, strictly decreasing whenever $r_{\mathrm{f}} < r_{\max}$, and bounded by
\begin{align}
    r_{\mathrm{f}} + (r_{\max} - r_{\mathrm{f}})\, e^{-1/c_{\mathrm{d}}} \;\leq\; \psi_{ij}(\tau) \;\leq\; r_{\max}.
\end{align}
Consequently, the conflict test~\eqref{eq:conflict} flags every encounter whose predicted gap lies within the neighbor's current adaptive radius, $g_{ij} \leq r_{\mathrm{f}} = r_j(\hat{v}_j)$, at any prediction time, while an encounter is tested against the widest tolerance $r_{\max}$ only at $\tau^{*}_{ij} = 0$.
\end{lemma}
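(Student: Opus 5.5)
The argument is a direct monotonicity computation on~\eqref{eq:funnel}, preceded by one preliminary inequality. We first establish $r_{\mathrm{f}} \leq r_{\max}$. Since $r_{\mathrm{f}} = r_j(\hat{v}_j) = r_{\min} + \alpha \hat{v}_j^2/(2U_{\max})$ and $r_{\max}$ is the same expression evaluated at $V_{\max}$, this reduces to $\hat{v}_j \leq V_{\max}$. The fitted speed is a least-squares slope of observed positions of a drone obeying $\|\mathbf{v}_j\| \leq V_{\max}$, so we take it as bounded by $V_{\max}$.

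This bound on the fitted speed is the one point we expect to need care. A noisy least-squares fit could in principle overshoot $V_{\max}$. The first remedy we would try is to note that the slope of a first-order fit is a convex combination of finite-difference velocities, each of which is bounded by $V_{\max}$ under the dynamics~\eqref{eq:dynamics}. If that fails, we would state the bound under the standing assumption that $\hat{v}_j$ is clipped to $[0, V_{\max}]$.

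With $r_{\mathrm{f}} \leq r_{\max}$ in hand, we write $\psi_{ij}(\tau) = r_{\mathrm{f}} + (r_{\max} - r_{\mathrm{f}})\, \phi(\tau)$ with $\phi(\tau) = \exp(-\tau/(c_{\mathrm{d}} t_{\max}))$. Since $c_{\mathrm{d}} > 0$ and $t_{\max} > 0$, $\phi$ is strictly decreasing from $\phi(0) = 1$ to $\phi(t_{\max}) = e^{-1/c_{\mathrm{d}}}$. Its derivative is $\psi_{ij}'(\tau) = -(r_{\max} - r_{\mathrm{f}})\,\phi(\tau)/(c_{\mathrm{d}} t_{\max})$. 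This is $\leq 0$ in general and $< 0$ when $r_{\mathrm{f}} < r_{\max}$, which gives both the nonincreasing and the strictly decreasing claims. Evaluating $\psi_{ij}$ at the endpoints $\tau = 0$ and $\tau = t_{\max}$ then gives the two-sided bound, again because $r_{\max} - r_{\mathrm{f}} \geq 0$.

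For the consequences, the lower bound gives $\psi_{ij}(\tau) \geq r_{\mathrm{f}}$ for every $\tau \in [0, t_{\max}]$. Since $\tau^{*}_{ij} \in [0, t_{\max}]$ by the clamp in~\eqref{eq:tca}, any encounter with $g_{ij} \leq r_{\mathrm{f}}$ satisfies $g_{ij} \leq \psi_{ij}(\tau^{*}_{ij})$, so by \autoref{def:conflict} we have $j \in \mathcal{C}_i$.

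For the last clause, $\psi_{ij}(\tau) = r_{\max}$ requires $(r_{\max} - r_{\mathrm{f}})(1 - \phi(\tau)) = 0$. When $r_{\mathrm{f}} < r_{\max}$ this forces $\phi(\tau) = 1$, that is $\tau = 0$, so the widest tolerance is applied only when $\tau^{*}_{ij} = 0$. In the degenerate case $r_{\mathrm{f}} = r_{\max}$ the tube is uniform, and we would remark that the clause should then be read as the tolerance never exceeding $r_{\max}$.
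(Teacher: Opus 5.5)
Your proposal is correct and follows the paper's proof: you use monotonicity of the exponential in $\tau$, obtain $r_{\mathrm{f}} \leq r_{\max}$ from $\hat v_j \leq V_{\max}$, evaluate the endpoints, and then apply $\psi_{ij} \geq r_{\mathrm{f}}$ at the clamped $\tau^{*}_{ij}$. Your two refinements are both valid and go slightly beyond the paper: the convex-combination argument (the least-squares slope averages consecutive finite differences, each equal to $(\mathbf{v}_j(s)+\mathbf{v}_j(s+1))/2$ under the double-integrator dynamics) actually justifies $\hat v_j \leq V_{\max}$, which the paper merely asserts, for noise-free observations; and the ``only at $\tau^{*}_{ij}=0$'' clause does fail in the degenerate case $r_{\mathrm{f}} = r_{\max}$, which the paper's proof silently excludes.
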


\begin{proof}
$\sigma$ is increasing in $\tau$ and $r_{\mathrm{f}} = r_j(\hat v_j) \leq r_{\max}$ by~\eqref{eq:adaptive_radius} with $\hat v_j \leq V_{\max}$, so $\psi_{ij}$ is nonincreasing, strictly decreasing for $r_{\mathrm{f}} < r_{\max}$, with $\psi_{ij}(0) = r_{\max}$ and $\psi_{ij}(t_{\max}) = r_{\mathrm{f}} + (r_{\max} - r_{\mathrm{f}})e^{-1/c_{\mathrm{d}}}$. In particular $\psi_{ij}(\tau) \geq r_{\mathrm{f}}$ for all $\tau$, so $g_{ij} \leq r_{\mathrm{f}}$ implies~\eqref{eq:conflict}, and $\psi_{ij}(\tau) = r_{\max}$ only at $\tau = 0$.
\end{proof}

\autoref{lem:funnel_envelope} separates the two regimes the funnel is designed for: a predicted violation of the neighbor's own safety zone is never discounted, no matter how far ahead it occurs, whereas encounters outside that zone are discounted more strongly the farther ahead they are predicted.

\subsection{Conflict-Driven Variable Horizon}\label{ssec:horizon}

We propose to set the prediction horizon based on the time to conflict, so that the planning window covers every predicted conflict but reaches no farther than necessary: the horizon must span the predicted conflict, $H_i \Delta t \geq \tau^{*}_{ij}$, clipped to a fixed admissible band $[H_{\min}, H_{\max}]$. Throughout, \emph{admissible} qualifies the horizon and means membership in this band, while \emph{constraint-admissible} qualifies states that satisfy the airspace and speed limits.

\begin{definition}[Variable-horizon policy]\label{def:horizon}
At decision time $k\Delta t$, drone $i$ selects the prediction horizon
\begin{align}\label{eq:horizon}
    H_i(k) =
    \begin{cases}
        \operatorname{clip}\!\Big(\displaystyle\max_{j \in \mathcal{C}_i}\big\lceil \tau^{*}_{ij}/\Delta t \big\rceil,\ H_{\min},\ H_{\max}\Big), & \mathcal{C}_i \neq \emptyset, \\[6pt]
        H_{\min}, & \mathcal{C}_i = \emptyset,
    \end{cases}
\end{align}
where $0 < H_{\min} \leq H_{\max}$ are fixed integer bounds.
\end{definition}

The choice~\eqref{eq:horizon} is the cheapest admissible policy that covers all predicted conflicts.

\begin{proposition}[Minimal covering horizon]\label{prop:minimal_horizon}
Let $\mathcal{C}_i \neq \emptyset$. The policy~\eqref{eq:horizon} covers every predicted conflict, $H_i(k)\,\Delta t \geq \tau^{*}_{ij}$ for all $j \in \mathcal{C}_i$, and it is minimal: every integer horizon $H \in [H_{\min}, H_{\max}]$ whose window covers all predicted conflicts satisfies $H \geq H_i(k)$. If $\mathcal{C}_i = \emptyset$, the policy returns the smallest admissible horizon $H_{\min}$.
\end{proposition}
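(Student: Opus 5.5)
The plan is to make the clipping in~\eqref{eq:horizon} explicit and then reduce both claims to the defining property of the ceiling function. Let $\mathcal{C}_i \neq \emptyset$ and set $M \triangleq \max_{j \in \mathcal{C}_i} \lceil \tau^{*}_{ij}/\Delta t \rceil$. This maximum is over a finite set and is therefore attained. Then $H_i(k) = \min(\max(M, H_{\min}), H_{\max})$.

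\textbf{Upper clip (the step I expect to be the only real obstacle).} I first show that the upper clip is never active. A priori, clipping at $H_{\max}$ could push the horizon below some $\lceil \tau^{*}_{ij}/\Delta t\rceil$ and break coverage. By~\eqref{eq:tca}, $\tau^{*}_{ij}$ is either clamped to $[0, t_{\max}]$ or equal to $0$, so in both cases $0 \le \tau^{*}_{ij} \le t_{\max} = H_{\max}\Delta t$. Hence $\tau^{*}_{ij}/\Delta t \le H_{\max}$. Since $H_{\max}$ is an integer, this gives $\lceil \tau^{*}_{ij}/\Delta t\rceil \le H_{\max}$, and so $M \le H_{\max}$. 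Together with $H_{\min} \le H_{\max}$, this yields $\max(M, H_{\min}) \le H_{\max}$, and therefore $H_i(k) = \max(M, H_{\min})$.

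\textbf{Coverage.} For every $j \in \mathcal{C}_i$ we have
\begin{align*}
    H_i(k) \ge M \ge \lceil \tau^{*}_{ij}/\Delta t\rceil \ge \tau^{*}_{ij}/\Delta t .
\end{align*}
Multiplying by $\Delta t > 0$ gives $H_i(k)\,\Delta t \ge \tau^{*}_{ij}$.

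\textbf{Minimality.} Let $H \in [H_{\min}, H_{\max}]$ be an integer with $H\Delta t \ge \tau^{*}_{ij}$ for all $j \in \mathcal{C}_i$. Then $H \ge \tau^{*}_{ij}/\Delta t$. Because $H$ is an integer, the least-integer property of the ceiling gives $H \ge \lceil \tau^{*}_{ij}/\Delta t\rceil$ for each $j$, and hence $H \ge M$. Admissibility gives $H \ge H_{\min}$. Combining the two, $H \ge \max(M, H_{\min}) = H_i(k)$. The policy's own value $H_i(k)$ is itself an admissible integer that covers all conflicts, so it is the minimum of this set.

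\textbf{Empty conflict set.} If $\mathcal{C}_i = \emptyset$, then~\eqref{eq:horizon} returns $H_{\min}$ directly. This is the smallest element of the band $[H_{\min}, H_{\max}]$, and the covering requirement holds vacuously.
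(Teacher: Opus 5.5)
Your proof is correct and follows the paper's argument essentially step for step. You use $\tau^{*}_{ij} \le t_{\max} = H_{\max}\Delta t$ from~\eqref{eq:tca} to show the upper clip is never active, then get coverage from $\lceil x\rceil \ge x$ and minimality from the least-integer property of the ceiling together with $H \ge H_{\min}$; your explicit note that the integrality of $H_{\max}$ is what yields $\lceil \tau^{*}_{ij}/\Delta t\rceil \le H_{\max}$ spells out a detail the paper leaves implicit.
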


\begin{proof}
By~\eqref{eq:tca}, $\tau^{*}_{ij} \leq t_{\max} = H_{\max}\Delta t$, so $\lceil \tau^{*}_{ij}/\Delta t\rceil \leq H_{\max}$ and the upper clip in~\eqref{eq:horizon} is never active. Hence $H_i(k) = \max\{H_{\min},\ \max_{j \in \mathcal{C}_i} \lceil \tau^{*}_{ij}/\Delta t\rceil\} \geq \tau^{*}_{ij}/\Delta t$ for every $j \in \mathcal{C}_i$, which is coverage. Conversely, an integer $H$ with $H\Delta t \geq \tau^{*}_{ij}$ for all $j \in \mathcal{C}_i$ satisfies $H \geq \max_j\lceil \tau^{*}_{ij}/\Delta t\rceil$, and admissibility requires $H \geq H_{\min}$, so $H \geq H_i(k)$.
\end{proof}

Since the per-step computational cost increases superlinearly with the horizon (\autoref{ssec:cost}), \autoref{prop:minimal_horizon} shows that requirement \textbf{(E)} holds by construction rather than by tuning. Since the relative travel to closest approach is $\|\Delta\mathbf{v}_{ij}\|\,\tau^{*}_{ij}$, a conflict that is distant in space corresponds to a large $\tau^{*}_{ij}$ and hence a long horizon and early anticipation, whereas a near conflict yields a short horizon and a fast reaction. The lower clip to $H_{\min}$ is the operative guarantee exploited in \autoref{sec:theory}.

\subsection{Integration into the Distributed MPC}\label{ssec:integration}

The horizon~\eqref{eq:horizon} is the only quantity the scheme exposes to the controller. At each step, drone $i$ recomputes $H_i(k)$ from the current history and resizes its local solver and ADMM state to $H_i(k)$ before the optimization. Warm starts from the previous step are padded or truncated to the new length to preserve solver efficiency across horizon changes. This leaves the cost, the constraints, and the iteration scheme of the DMPC unchanged: a solver invoked with a constant horizon recovers the fixed-horizon controller exactly.

One control step proceeds as in Algorithm~\ref{alg:step}. It mirrors the block structure of \autoref{fig:overview}.

\begin{algorithm}[t]
\caption{One control step of drone $i$ at time $k\Delta t$}
\label{alg:step}
\begin{algorithmic}[1]
\REQUIRE histories $\{\mathbf{p}_j(k{-}L{+}1{:}k)\}_{j \in \mathcal{N}_i}$, goal $\bar{\mathbf{p}}_i$, previous solution
\STATE fit the ego line $\hat{\mathbf{p}}_i(\cdot)$~\eqref{eq:linear_pred}--\eqref{eq:speed_floor}; $\mathcal{C}_i \leftarrow \emptyset$
\FOR{each neighbor $j \in \mathcal{N}_i$}
    \STATE fit the flight line $\hat{\mathbf{p}}_j(\cdot)$ from the history~\eqref{eq:linear_pred}--\eqref{eq:speed_floor}
    \STATE compute $\tau^{*}_{ij}$ and $g_{ij}$~\eqref{eq:tca}--\eqref{eq:gap}
    \IF{$g_{ij} \leq \psi_{ij}(\tau^{*}_{ij})$~\eqref{eq:funnel},~\eqref{eq:conflict}}
        \STATE $\mathcal{C}_i \leftarrow \mathcal{C}_i \cup \{j\}$
    \ENDIF
\ENDFOR
\STATE $H_i(k) \leftarrow$ variable-horizon policy~\eqref{eq:horizon} from $\mathcal{C}_i$
\STATE resize solver and ADMM state to $H_i(k)$; pad or trim the warm start
\STATE solve the local MPC of \autoref{ssec:preliminaries}; apply the first input $\mathbf{u}_i(k)$
\end{algorithmic}
\end{algorithm}

Throughout the analysis we require the lower bound to meet the feasibility condition of the underlying controller, which is the only condition the variable horizon has to respect.

\begin{assumption}[Feasible minimum horizon]\label{ass:hmin}
The lower bound of \autoref{def:horizon} satisfies $H_{\min} \geq H_{\min}^{\mathrm{feas}}$, where
\begin{align}\label{eq:hmin_feas}
    H_{\min}^{\mathrm{feas}} = \left\lceil \frac{1}{\Delta t}\sqrt{\frac{2 r_{\min}\alpha}{U_{\max}}}\,\right\rceil
\end{align}
is the minimum prediction horizon for per-step solver feasibility of the underlying MPC at the operating density~\cite{adaptive_dmpc_2026}.
\end{assumption}

Note that \autoref{ass:hmin} constrains only the constant $H_{\min}$, not the time-varying horizon, and is verified once from the airspace and dynamics parameters. Hence, every horizon produced by~\eqref{eq:horizon} lies in $[H_{\min}^{\mathrm{feas}}, H_{\max}]$, which, as \autoref{sec:theory} shows, is sufficient to preserve the feasibility and stability guarantees of the underlying controller.

\section{Feasibility and Stability of the Variable-Horizon DMPC}\label{sec:theory}

The fixed-horizon adaptive DMPC of~\cite{adaptive_dmpc_2026} is proven feasible and stable for any \emph{constant} admissible horizon under per-step solver feasibility, a property we make explicit in \autoref{ssec:var_guarantees}. These results do not simply transfer to a horizon that changes from step to step. Standard MPC arguments certify stability with the optimal value function, which depends on the horizon length, so a switch could break the descent between two steps, and recursive feasibility via the shifted previous solution presumes that consecutive problems share their length. The goal of this section is to prove feasibility and asymptotic stability for the conflict-driven horizon of \autoref{sec:framework}, which is drawn at every step from the admissible band $[H_{\min}, H_{\max}]$ as in \autoref{def:horizon}. Two steps make this possible. We work with a certificate that is a Lyapunov function of the physical state alone and therefore does not change when the horizon does, and we prove that the single horizon-dependent quantity in the analysis, a finite-horizon residual, is uniformly bounded over the band. We first recall the fixed-horizon guarantees (\autoref{ssec:recalled}), cast their stability argument as a single perturbation bound (\autoref{ssec:perturbation}), and fix the horizon bounds (\autoref{ssec:bounds}). We then establish feasibility and asymptotic stability under the variable horizon (\autoref{ssec:var_guarantees}), before isolating the principle that safety is decoupled from prediction accuracy (\autoref{ssec:decoupling}). \autoref{sec:extended_models} lifts every result to quadrotor dynamics.

\subsection{Fixed-Horizon DMPC Guarantees}\label{ssec:recalled}

We summarize the results of~\cite{adaptive_dmpc_2026} on which the analysis builds. To this end, we recall the necessary propositions, for their proofs see~\cite{adaptive_dmpc_2026}. The geometric capacity of the airspace under the adaptive radius~\eqref{eq:adaptive_radius} follows from the three-dimensional sphere-packing bound~\cite{drones10020139}: with $\delta_3 = \pi/\sqrt{18} \approx 0.74$ the optimal packing density, the maximum drone count at radius $r$ is $N_{\mathrm{c}}(r) = \lfloor \delta_3\, \mathrm{vol}(\Omega) / (\tfrac{4}{3}\pi r^3)\rfloor$, and the critical adaptive density is
\begin{align}\label{eq:n_crit_var}
    N_{\mathrm{c}}^{\mathrm{v}} = N_{\mathrm{c}}(r_{\min} + \varepsilon)
    = \left\lfloor \delta_3 \cdot \frac{\mathrm{vol}(\Omega)}{\tfrac{4}{3}\pi (r_{\min} + \varepsilon)^3} \right\rfloor,
\end{align}
with $\varepsilon > 0$ arbitrarily small. The horizon required for the solver to plan beyond a neighbor's safety zone is likewise bounded below~\cite{adaptive_dmpc_2026}: under $N \leq N_{\mathrm{c}}^{\mathrm{v}}$, a feasible trajectory exists only if $H \geq H_{\min}^{\mathrm{feas}}$, the bound~\eqref{eq:hmin_feas} already stated in \autoref{ass:hmin}, computable in closed form from the base radius, the adaptation parameter, the acceleration limit, and the sampling time. Stability rests on a Lyapunov function defined on the physical state, and therefore one that does not change with the prediction horizon,
\begin{align}\label{eq:lyapunov}
    V(\mathbf{x}) = \sum_{i=1}^{N} \Big[ \|\mathbf{p}_i - \bar{\mathbf{p}}_i\|^2 + \gamma \|\mathbf{v}_i\|^2 + \delta \big(r_i^2(t) - r_{\min}^2\big) \Big],
\end{align}
with $\gamma, \delta > 0$ and equilibrium value $V(\mathbf{x}^\star) = 0$ at $\mathbf{x}^\star = \{(\bar{\mathbf{p}}_i, \mathbf{0})\}_{i=1}^N$. Near the goal, the separation constraints are inactive and the adaptive radii contract to $r_{\min}$, so each drone's local subproblem reduces to the unconstrained linear--quadratic regulator (LQR). This yields the nominal feedback that drives $V$ downward. Throughout, the Lyapunov analysis is carried out in continuous time for the sample-and-hold closed loop, following~\cite{adaptive_dmpc_2026}: the zero-order-hold mismatch between sampling instants is a bounded feedback perturbation of order $\mathcal{O}(\Delta t)$ that vanishes near equilibrium and is absorbed by the perturbation bound of \autoref{ssec:perturbation}.

\begin{proposition}[Nominal Lyapunov Descent~{\cite{adaptive_dmpc_2026}}]\label{prop:nominal_descent}
Suppose $\alpha < \alpha_{\mathrm{c}} = 2 r_{\min} U_{\max}/V_{\max}^2$ and $N < N_{\mathrm{c}}^{\mathrm{v}}$. Under the LQR feedback $\mathbf{u}_i^{\mathrm{lqr}} = -\mathbf{K}_p(\mathbf{p}_i - \bar{\mathbf{p}}_i) - \mathbf{K}_v \mathbf{v}_i$ with $\mathbf{K}_p, \mathbf{K}_v > 0$, the closed loop satisfies
\begin{align}\label{eq:nominal_descent}
    \dot V \leq -c_p \sum_{i=1}^N \|\mathbf{p}_i - \bar{\mathbf{p}}_i\|^2 - c_v \sum_{i=1}^N \|\mathbf{v}_i\|^2 \leq -\lambda_{\min} V,
\end{align}
with $c_p, c_v, \lambda_{\min} > 0$ independent of the prediction horizon.
\end{proposition}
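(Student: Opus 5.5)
The plan is to differentiate \eqref{eq:lyapunov} directly along the closed loop $\dot{\mathbf{p}}_i = \mathbf{v}_i$, $\dot{\mathbf{v}}_i = \mathbf{u}_i^{\mathrm{lqr}}$, then collect terms drone by drone. Write $\mathbf{e}_i = \mathbf{p}_i - \bar{\mathbf{p}}_i$. The hypotheses $N < N_{\mathrm{c}}^{\mathrm{v}}$ and $\alpha < \alpha_{\mathrm{c}}$ serve only to place us in the regime recalled before the proposition. There the separation and airspace constraints are inactive and the local subproblem reduces to the LQR, so $\mathbf{u}_i = \mathbf{u}_i^{\mathrm{lqr}}$ exactly. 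With $\mathbf{K}_p = k_p I$ and $\mathbf{K}_v = k_v I$, the first two summands give
\begin{align*}
2\mathbf{e}_i^\top\mathbf{v}_i + 2\gamma\mathbf{v}_i^\top(-k_p\mathbf{e}_i - k_v\mathbf{v}_i)
= 2(1-\gamma k_p)\,\mathbf{e}_i^\top\mathbf{v}_i - 2\gamma k_v\|\mathbf{v}_i\|^2 .
\end{align*}
I would fix $\gamma = 1/k_p$ to cancel the indefinite cross term. The radius summand satisfies $\tfrac{d}{dt}r_i^2 = 2r_i\,\alpha\,\mathbf{v}_i^\top\mathbf{u}_i/U_{\max}$. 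Here $\alpha<\alpha_{\mathrm{c}}$ and $\|\mathbf{v}_i\|\le V_{\max}$ give $r_i \le r_{\max} < 2r_{\min}$. The $\mathbf{v}_i^\top\mathbf{K}_v\mathbf{v}_i$ part of this derivative is negative and can be kept. The remaining $\mathbf{e}_i^\top\mathbf{v}_i$ part I would bound by Young's inequality with weight $\delta$, choosing $\delta$ small enough that it is dominated on the bounded constraint-admissible set. This should yield $\dot V \le -c_v\sum_i\|\mathbf{v}_i\|^2$, where $c_v$ depends only on $k_p, k_v, \alpha, r_{\min}, U_{\max}, V_{\max}$.

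The step I expect to be the main obstacle is the $-c_p\|\mathbf{e}_i\|^2$ term. Every summand of $\dot V$ carries a factor $\mathbf{v}_i$, so at a state with $\mathbf{v}_i=\mathbf{0}$ and $\mathbf{e}_i\neq\mathbf{0}$ the pointwise derivative is zero. No choice of $\gamma$ or $\delta$ can then produce $c_p>0$ pointwise for $V$ as written. I would therefore establish the position term in the sample-and-hold sense in which the paper states the analysis is carried out. Over one interval $[k\Delta t,(k+1)\Delta t]$ with held input $-k_p\mathbf{e}_i(k)-k_v\mathbf{v}_i(k)$, the exact double-integrator update gives
\begin{align*}
V(k+1)-V(k) \le -\Delta t\,\big(c_p\|\mathbf{e}(k)\|^2 + c_v\|\mathbf{v}(k)\|^2\big)
\end{align*}
for suitably small gains relative to $\Delta t$. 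The $\mathcal{O}(\Delta t^2)$ remainders are exactly the zero-order-hold perturbation that the paper says is absorbed later. I would state this averaged form explicitly as the meaning of \eqref{eq:nominal_descent}.

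The second inequality in \eqref{eq:nominal_descent} then follows from an upper bound on $V$. Since $r_i^2 - r_{\min}^2 \le (r_{\max}+r_{\min})\,\alpha\|\mathbf{v}_i\|^2/(2U_{\max})$, we have $V \le \sum_i\|\mathbf{e}_i\|^2 + (\gamma + C\delta)\sum_i\|\mathbf{v}_i\|^2$ with $C = (r_{\max}+r_{\min})\alpha/(2U_{\max})$. Hence $\lambda_{\min} = \min\{c_p,\ c_v/(\gamma + C\delta)\}$ works.

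Horizon independence holds because $H$ never appears anywhere in the argument: $V$, the feedback gains, and every constant are functions of the physical parameters only.
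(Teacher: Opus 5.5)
Your diagnosis is correct, and it is the crux. Every summand of $\dot V$ for \eqref{eq:lyapunov} carries a factor $\mathbf v_i$. So at any state with all $\mathbf v_i=\mathbf 0$ and some $\mathbf p_i\neq\bar{\mathbf p}_i$ one has $\dot V=0$, while both right-hand sides of \eqref{eq:nominal_descent} are strictly negative.

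The paper does not prove the proposition itself. It imports it from \cite{adaptive_dmpc_2026}, and in the proof of \autoref{lem:iss_perturbation} it attributes it to a direct differentiation of $V$, which runs into exactly this obstruction. So \eqref{eq:nominal_descent} cannot hold for $V$ as written.

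Your sample-and-hold reinterpretation does not rescue it. Start from $\mathbf v_i(k)=\mathbf 0$, with your $\mathbf e_i=\mathbf p_i-\bar{\mathbf p}_i$. The held input $-k_p\mathbf e_i(k)$ gives $\mathbf e_i(k+1)=(1-\tfrac12k_p\Delta t^2)\mathbf e_i(k)$ and $\mathbf v_i(k+1)=-k_p\Delta t\,\mathbf e_i(k)$. With $\gamma=1/k_p$ the first two summands become $(1+\tfrac14k_p^2\Delta t^4)\|\mathbf e_i(k)\|^2$, and the radius summand turns positive. Hence $V$ strictly increases, for every choice of gains.

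No other $\gamma$ helps either. Near the goal, $V$ is quadratic with per-axis weight $P=\mathrm{diag}(1,\gamma')$, where $\gamma'=\gamma+\delta\alpha r_{\min}/U_{\max}$. For the exact zero-order-hold matrix one finds
\begin{align*}
A&=\begin{pmatrix}1-\tfrac12k_p\Delta t^2 & \Delta t-\tfrac12k_v\Delta t^2\\ -k_p\Delta t & 1-k_v\Delta t\end{pmatrix},\\
\det\big(A^\top PA-P\big)&=-\Delta t^2\big(k_p\gamma'-1+\tfrac12k_v\Delta t\big)^2\le 0,
\end{align*}
so the one-step difference is never negative definite. Moreover, at $\mathbf v=\mathbf 0$ its $\|\mathbf e\|^2$ coefficient is $\mathcal O(\Delta t^2)$, so a bound of the form $-\Delta t\,c_p$ with $c_p$ independent of $\Delta t$ is impossible in any case.

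Your continuous-time velocity step also fails as set up. With $\gamma=1/k_p$ the radius summand leaves the term $-\tfrac{2\delta\alpha k_p r_i}{U_{\max}}\mathbf v_i^\top\mathbf e_i$, which is linear in $\|\mathbf v_i\|$. No $\delta>0$ and no bound on $\|\mathbf e_i\|$ lets $c_v\|\mathbf v_i\|^2$ dominate it as $\mathbf v_i\to\mathbf 0$, and it is positive whenever the drone moves toward its goal. Choose instead $\gamma+\delta\alpha r_{\min}/U_{\max}=1/k_p$. The leftover then carries the factor $r_i-r_{\min}=\alpha\|\mathbf v_i\|^2/(2U_{\max})$, so it is cubic in $\|\mathbf v_i\|$. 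It is dominated on the bounded admissible set for small $\delta$, which gives a correct $\dot V\le-c_v\sum_i\|\mathbf v_i\|^2$.

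The missing idea for the stated form is to change $V$ itself. Adding $2\epsilon\sum_i\mathbf e_i^\top\mathbf v_i$ contributes $2\epsilon\|\mathbf v_i\|^2-2\epsilon k_p\|\mathbf e_i\|^2-2\epsilon k_v\mathbf e_i^\top\mathbf v_i$ to the derivative. After Young's inequality this supplies $c_p=\epsilon k_p$, and the modified function stays positive definite for small $\epsilon$. Your upper-bound step for $\lambda_{\min}$ and your horizon-independence remark then go through unchanged.

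If $V$ must stay as in \eqref{eq:lyapunov}, the most that holds is $\dot V\le-c_v\sum_i\|\mathbf v_i\|^2$ together with LaSalle's invariance principle. That yields asymptotic stability, but not the $-c_p$ position term that \autoref{lem:iss_perturbation} relies on.
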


The finite-horizon DMPC realizes this feedback only approximately, and only up to the coupling between neighboring subproblems. Both deviations are collected in a single perturbation term.

\begin{proposition}[Perturbed LQR Approximation~{\cite{adaptive_dmpc_2026}}]\label{prop:perturbed_lqr}
At the fixed point of the distributed iteration with horizon $H$, each local DMPC solution takes the form
\begin{align}\label{eq:perturbed_lqr}
    \mathbf{u}_i^* = \mathbf{u}_i^{\mathrm{lqr}} + \mathbf{e}_i, \qquad \mathbf{e}_i = \boldsymbol{\rho}_i(H) + \eta_i,
\end{align}
where $\boldsymbol{\rho}_i(H)$ is the finite-horizon approximation residual and $\eta_i$ is the coupling perturbation with $\|\eta_i\| \leq C\beta$, $C > 0$ depending on the constraint geometry and $\beta < 1$ the contraction rate of the distributed update map. Both terms vanish as the system approaches equilibrium.
\end{proposition}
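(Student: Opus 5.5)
We would establish the decomposition~\eqref{eq:perturbed_lqr} by comparing the optimality conditions of the local DMPC subproblem at the fixed point of the distributed iteration with the Riccati characterization of the infinite-horizon LQR gain. Near the goal, the separation and airspace constraints are inactive and the adaptive radii have contracted to $r_{\min}$, as noted before \autoref{prop:nominal_descent}. First, we would argue that, for the neighbor trajectories held at their fixed-point values, the local problem of drone $i$ is a strictly convex quadratic program in the double-integrator dynamics~\eqref{eq:dynamics}. The stage cost is $\ell$ plus $\lambda\|\mathbf{v}_i\|^2$, and the horizon is $H$. Its unconstrained solution is therefore a linear feedback $\mathbf{u}_i^* = -\mathbf{K}(H)\mathbf{x}_i$, with $\mathbf{K}(H)$ obtained from the finite Riccati recursion run $H$ steps from the terminal weight. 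We would define $\boldsymbol{\rho}_i(H) \triangleq (\mathbf{K}_\infty - \mathbf{K}(H))(\mathbf{x}_i - \bar{\mathbf{x}}_i)$, where $\mathbf{K}_\infty = (\mathbf{K}_p, \mathbf{K}_v)$ is the stationary gain. Since the pair in~\eqref{eq:dynamics} is controllable and the weights are positive definite, the Riccati iterates converge exponentially, so that $\|\mathbf{K}_\infty - \mathbf{K}(H)\| \le c\,\mu^H$ for some $\mu<1$. As a result, $\boldsymbol{\rho}_i(H)$ is linear in the goal deviation and vanishes at equilibrium.

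Second, we would isolate the coupling term. We would write the actual fixed-point solution as the solution of the decoupled problem plus a correction $\eta_i$ caused by the active, or nearly active, coupling constraints and the ADMM consensus multipliers. The distributed update map is a contraction with rate $\beta<1$. Its fixed point therefore differs from the decoupled local optimum by at most the size of the coupling perturbation times a geometric sum. This yields $\|\eta_i\|\le C\beta$, with $C$ bounded by a Lipschitz constant of the QP solution map with respect to the neighbor data, which depends on the constraint geometry. Near equilibrium the separation constraints become inactive, so the multipliers vanish and $\eta_i\to 0$.

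The main obstacle is the second step, namely obtaining a bound of the form $C\beta$ rather than something like $C\beta/(1-\beta)$, and establishing a Lipschitz property of the constrained QP solution map that holds uniformly over changing active sets. Our first approach would be a standard sensitivity argument for parametric strictly convex QPs: the solution is piecewise affine and globally Lipschitz in the parameters under a linear independence constraint qualification. We would then absorb the factor $1/(1-\beta)$ into $C$. The exponential Riccati bound in the first step is routine by comparison, although the sample-and-hold mismatch of order $\mathcal{O}(\Delta t)$ must also be folded into $\boldsymbol{\rho}_i$.
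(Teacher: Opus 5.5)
\textbf{Verdict: there is a genuine gap.} Both parts of your argument fail: the split into $\boldsymbol{\rho}_i(H)$ and $\eta_i$ only works near the goal, and the $C\beta$ bound cannot come from convex-QP sensitivity.

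\textbf{Scope of the decomposition.} The paper gives no proof of this proposition; it imports it from the cited fixed-horizon work. The only in-paper justification is the remark after it: $\|\boldsymbol{\rho}_i(H)\|\le\bar\rho$ on the \emph{whole} constraint-admissible set, because $\|\mathbf{u}_i^*\|\le U_{\max}$ and $\mathbf{u}_i^{\mathrm{lqr}}$ is bounded there. The decomposition is therefore meant globally, and \autoref{thm:var_guarantees} uses it far from equilibrium. Your Step 1 is only a near-goal computation, and with your definition $\boldsymbol{\rho}_i(H) = (\mathbf{K}_\infty - \mathbf{K}(H))(\mathbf{x}_i - \bar{\mathbf{x}}_i)$ the identity \eqref{eq:perturbed_lqr} is false elsewhere.

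Take a single drone, so there is no coupling and $\eta_i$ must be zero. Whenever $\|\mathbf{K}(H)(\mathbf{x}_i - \bar{\mathbf{x}}_i)\| > U_{\max}$, which happens far enough from the goal, the optimal input has norm at most $U_{\max}$. It therefore differs from $-\mathbf{K}(H)(\mathbf{x}_i - \bar{\mathbf{x}}_i) = \mathbf{u}_i^{\mathrm{lqr}} + \boldsymbol{\rho}_i(H)$. The deviation caused by the local input, speed, and airspace constraints has no place in your split, and pushing it into $\eta_i$ would break $\|\eta_i\|\le C\beta$.

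The repair is to define $\boldsymbol{\rho}_i(H)$ as the deviation from $\mathbf{u}_i^{\mathrm{lqr}}$ of the optimum of the locally constrained problem with only the separation constraints removed. For convex $\Omega$ this is a convex program with a unique optimal input. This $\boldsymbol{\rho}_i(H)$ is bounded by $U_{\max} + \sup\|\mathbf{u}_i^{\mathrm{lqr}}\|$. Near the goal it reduces to your Riccati expression, which vanishes by linearity in $\mathbf{x}_i - \bar{\mathbf{x}}_i$ alone; the rate $c\mu^H$ is not needed for anything the proposition asserts.

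\textbf{The coupling bound.} Sensitivity theory for strictly convex QPs does not apply. Wherever $\eta_i \neq 0$, the local problem contains $\|\mathbf{p}_i(s) - \hat{\mathbf{p}}_j(s)\| \ge r_i(s) + r_j(s)$. This constraint excludes a ball and, through $r_i(\mathbf{v}_i)$, is nonconvex in the decision variables. The input and speed bounds are also second-order-cone rather than polyhedral. The optimizer of such a problem need not even be continuous in the neighbor data: it can jump from passing a neighbor on one side to passing it on the other. So no uniform Lipschitz constant can come from that route; the needed regularity is the contraction hypothesis itself, which the paper takes as given.

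Once you use that hypothesis, the geometric sum is unnecessary. Absorbing $1/(1-\beta)$ into $C$ is also not harmless: $C$ would then depend on $\beta$, contrary to the statement, and the threshold $\beta_{\mathrm{c}} = c_v/(\gamma_{\mathrm{eff}}C)$ of \autoref{cor:dmpc_stability} would become circular.

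Instead, apply the Lipschitz property once, at the fixed point. Suppose drone $i$'s update $T_i$ is $\beta$-Lipschitz in its neighbors' plans. Let $\mathbf{w}_{-i}$ be neighbor data in the contraction domain that leaves all separation constraints slack, so that $T_i(\mathbf{w}_{-i})$ is the separation-free optimum. Then $\|T_i(\mathbf{z}^*_{-i}) - T_i(\mathbf{w}_{-i})\| \le \beta\,\|\mathbf{z}^*_{-i} - \mathbf{w}_{-i}\| \le \beta D$, where $D$ is the diameter of the compact set of admissible trajectories in $\Omega\times\{\|\mathbf{v}\|\le V_{\max}\}$. Converting this trajectory difference into a first-input difference through the dynamics gives $\|\eta_i\| \le C\beta$, with $C$ independent of $\beta$ and no factor $1/(1-\beta)$.
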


Over the constraint-admissible set $\Omega \times \{\|\mathbf{v}_i\| \leq V_{\max}\}$ the residual is moreover uniformly bounded, $\|\boldsymbol{\rho}_i(H)\| \leq \bar\rho < \infty$ for every $H$, since every feasible input satisfies $\|\mathbf{u}_i^*\| \leq U_{\max}$ and the LQR feedback is bounded on this compact set.

\subsection{A Perturbation Bound for the Adaptive DMPC}\label{ssec:perturbation}

The stability of the DMPC, and of all its extensions in this paper, follows from a single observation: the nominal descent of \autoref{prop:nominal_descent} tolerates any bounded perturbation of the feedback, in the sense of input-to-state stability (ISS), with an ultimate bound proportional to the perturbation size.

\begin{lemma}[ISS under Bounded Feedback Perturbation]\label{lem:iss_perturbation}
Let $\alpha < \alpha_{\mathrm{c}}$ and $N < N_{\mathrm{c}}^{\mathrm{v}}$, and suppose each drone applies $\mathbf{u}_i = \mathbf{u}_i^{\mathrm{lqr}} + \mathbf{e}_i$ with $\sup_i \|\mathbf{e}_i\| \leq \bar e$. Then along the closed loop
\begin{align}
    \dot V &\leq -c_p \sum_{i=1}^N \|\mathbf{p}_i - \bar{\mathbf{p}}_i\|^2 - \sum_{i=1}^N \|\mathbf{v}_i\|\big(c_v \|\mathbf{v}_i\| - 2\gamma_{\mathrm{eff}}\, \bar e\big) \label{eq:iss_perturb} \\
    &\leq -c_p \sum_{i=1}^N \|\mathbf{p}_i - \bar{\mathbf{p}}_i\|^2 + \frac{N (\gamma_{\mathrm{eff}}\, \bar e)^2}{c_v}, \label{eq:iss_quad}
\end{align}
with $\gamma_{\mathrm{eff}} = \gamma + \delta\alpha r_{\max}/U_{\max}$ and $r_{\max} = r_{\min} + \alpha V_{\max}^2/(2U_{\max})$. Consequently $\dot V < 0$ outside the set $\{\max_i\|\mathbf{v}_i\| \leq 2\gamma_{\mathrm{eff}}\bar e / c_v\}$, so the closed loop is practically asymptotically stable with ultimate bound $\mathcal{O}(\bar e)$. If $\bar e \to 0$ as the state approaches equilibrium, the equilibrium is asymptotically stable.
\end{lemma}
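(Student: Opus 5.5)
The approach is to differentiate $V$ in \eqref{eq:lyapunov} along the perturbed closed loop and split $\dot V$ into the nominal part and a perturbation part. The nominal part is exactly the LQR closed loop and is bounded by \autoref{prop:nominal_descent}. The perturbation part is linear in $\mathbf{e}_i$ and has to be bounded by $\bar e$ times a velocity factor.

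Concretely, with $\dot{\mathbf{p}}_i = \mathbf{v}_i$ and $\dot{\mathbf{v}}_i = \mathbf{u}_i^{\mathrm{lqr}} + \mathbf{e}_i$, the position term of $V$ does not involve $\mathbf{u}_i$, so it is unaffected by $\mathbf{e}_i$. The velocity term contributes the extra summand $2\gamma\,\mathbf{v}_i^\top \mathbf{e}_i$.

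For the radius term we use $r_i = r_{\min} + \alpha\|\mathbf{v}_i\|^2/(2U_{\max})$ from \eqref{eq:adaptive_radius}, which gives $\tfrac{d}{dt} r_i^2 = 2 r_i \dot r_i = 2 r_i (\alpha/U_{\max})\,\mathbf{v}_i^\top \dot{\mathbf{v}}_i$. This yields the extra summand $2\delta r_i(\alpha/U_{\max})\,\mathbf{v}_i^\top\mathbf{e}_i$. Bounding $r_i \le r_{\max}$ on the constraint-admissible set $\|\mathbf{v}_i\|\le V_{\max}$ and applying Cauchy--Schwarz, the total perturbation contribution per drone is at most
\[
2\big(\gamma + \delta\alpha r_{\max}/U_{\max}\big)\|\mathbf{v}_i\|\,\bar e = 2\gamma_{\mathrm{eff}}\,\bar e\,\|\mathbf{v}_i\| .
\]
Adding this to \eqref{eq:nominal_descent} gives \eqref{eq:iss_perturb} directly.

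For \eqref{eq:iss_quad}, complete the square in each drone's velocity term. Writing $a=\|\mathbf{v}_i\|$ and $b=\gamma_{\mathrm{eff}}\bar e$, we have $-c_v a^2 + 2 b a \le b^2/c_v$, and summing over $i$ gives the constant $N(\gamma_{\mathrm{eff}}\bar e)^2/c_v$.

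The sign statement follows from \eqref{eq:iss_perturb}. If some $\|\mathbf{v}_i\| > 2\gamma_{\mathrm{eff}}\bar e/c_v$, that summand is strictly negative. The remaining summands satisfy $-\|\mathbf{v}_j\|(c_v\|\mathbf{v}_j\| - 2\gamma_{\mathrm{eff}}\bar e) \ge -(\gamma_{\mathrm{eff}}\bar e)^2/c_v$, so they can be positive. This is the main obstacle: the per-drone split does not by itself give $\dot V<0$ globally outside the stated set. I would first try to handle it by also using the position term. Since \autoref{prop:nominal_descent} gives $\dot V_{\mathrm{nom}} \le -\lambda_{\min}V$, I would combine it with \eqref{eq:iss_quad} to obtain
\[
\dot V \le -\tfrac{\lambda_{\min}}{2}V \quad \text{whenever} \quad V \ge 2N(\gamma_{\mathrm{eff}}\bar e)^2/(c_v\lambda_{\min}) .
\]
To do this, use half of the nominal decrease against the constant and keep the other half. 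This is a standard ISS-Lyapunov inequality with gain quadratic in $\bar e$. A comparison argument then gives that $V$ is ultimately bounded by $\mathcal{O}(\bar e^2)$, so the state is ultimately bounded by $\mathcal{O}(\bar e)$, which is practical asymptotic stability. The velocity-set description is to be read as the region where the perturbation can dominate.

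For the final claim, suppose $\bar e$ is replaced by a state-dependent bound $\bar e(\mathbf{x}) \to 0$ as $\mathbf{x}\to\mathbf{x}^\star$, as \autoref{prop:perturbed_lqr} provides for $\boldsymbol{\rho}_i$ and $\eta_i$. Then the ultimate bound shrinks with the state. More precisely, the perturbation term $2\gamma_{\mathrm{eff}}\bar e(\mathbf{x})\|\mathbf{v}_i\|$ is $o(\|\mathbf{x}-\mathbf{x}^\star\|^2)$ near equilibrium, provided $\bar e(\mathbf{x}) = o(\|\mathbf{x}-\mathbf{x}^\star\|)$. If only linear vanishing is available, it is at least dominated by $\tfrac12\lambda_{\min}V$ in a neighbourhood, assuming a small gain. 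In either case $\dot V \le -\tfrac{\lambda_{\min}}{2}V$ locally, which gives asymptotic stability by Lyapunov's theorem.
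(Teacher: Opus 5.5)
The derivation of \eqref{eq:iss_perturb} and \eqref{eq:iss_quad} matches the paper's. It uses the same per-drone term $2(\gamma+\delta\alpha r_i/U_{\max})\,\mathbf{v}_i^\top\mathbf{e}_i$, the bound $r_i\le r_{\max}$ with Cauchy--Schwarz, and the maximization of $-c_v a^2+2\gamma_{\mathrm{eff}}\bar e\,a$.

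You diverge on the consequences, and there your route is the more careful one. The paper reads off $\dot V<0$ from the sign of each velocity summand ``together with the negative-definite position term.'' It then gets asymptotic stability from the neighborhood radius shrinking as $\bar e\to 0$. As you note, neither step is literally implied. Take $N\ge 2$ with all drones on their goals, one drone just above the speed threshold $2\gamma_{\mathrm{eff}}\bar e/c_v$, and the rest at speed $\gamma_{\mathrm{eff}}\bar e/c_v$. Then the right-hand side of \eqref{eq:iss_perturb} is positive. Meanwhile \eqref{eq:iss_quad} controls only positions, because it has spent all of the velocity dissipation. Keeping part of that dissipation, as you do, gives a genuine ISS-Lyapunov inequality. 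That yields a full-state $\mathcal{O}(\bar e)$ ultimate bound with an explicit decay rate.

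The rate hypothesis you add for the last claim is also needed, not a weakness. You require $\bar e=o(\|\mathbf{x}-\mathbf{x}^\star\|)$, or linear vanishing with a small gain. Without some such rate the claim fails: $\mathbf{e}_i=c\,\mathbf{v}_i/\|\mathbf{x}-\mathbf{x}^\star\|^{1/2}$ vanishes at the goal, yet it acts as negative damping near it and destroys asymptotic stability. The paper's version is shorter and keeps the statement's wording. Yours proves a slightly reformulated version of the statement's last two sentences that actually holds.

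There is one bookkeeping slip, which does not affect the conclusion. You cannot combine $\dot V_{\mathrm{nom}}\le-\lambda_{\min}V$ with \eqref{eq:iss_quad}, since \eqref{eq:iss_quad} has already used up the velocity dissipation. Split at \eqref{eq:iss_perturb} instead:
\begin{itemize}
\item Bound half of $c_p\sum_i\|\mathbf{p}_i-\bar{\mathbf{p}}_i\|^2+c_v\sum_i\|\mathbf{v}_i\|^2$ below by $\tfrac{\lambda_{\min}}{2}V$.
\item Spend the other half on the cross term. This gives $\dot V\le-\tfrac{\lambda_{\min}}{2}V+2N(\gamma_{\mathrm{eff}}\bar e)^2/c_v$.
\item Hence $\dot V\le-\tfrac{\lambda_{\min}}{4}V$ whenever $V\ge 8N(\gamma_{\mathrm{eff}}\bar e)^2/(c_v\lambda_{\min})$.
\end{itemize}
Your stated rate $\tfrac{\lambda_{\min}}{2}$ with threshold $2N(\gamma_{\mathrm{eff}}\bar e)^2/(c_v\lambda_{\min})$ does not follow from these bounds. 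Only the constants change, though. Since $V\ge\min(1,\gamma)\|\mathbf{x}-\mathbf{x}^\star\|^2$, the $\mathcal{O}(\bar e^2)$ ultimate bound on $V$ still gives the $\mathcal{O}(\bar e)$ bound on the state.
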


\begin{proof}
Substituting $\mathbf{u}_i = \mathbf{u}_i^{\mathrm{lqr}} + \mathbf{e}_i$ into $\dot V$ and using the differentiation of~\eqref{eq:lyapunov} from~\cite{adaptive_dmpc_2026}, the nominal terms give the descent~\eqref{eq:nominal_descent}, while each $\mathbf{e}_i$ contributes $2(\gamma + \delta\alpha r_i/U_{\max})\,\mathbf{v}_i^\top \mathbf{e}_i$. Bounding $r_i \leq r_{\max}$ and applying the Cauchy--Schwarz inequality with $\|\mathbf{e}_i\| \leq \bar e$ yields the perturbation term $2\gamma_{\mathrm{eff}}\bar e \sum_i \|\mathbf{v}_i\|$, hence~\eqref{eq:iss_perturb}. Each velocity summand is negative once $\|\mathbf{v}_i\| > 2\gamma_{\mathrm{eff}}\bar e/c_v$, and maximizing it over $\|\mathbf{v}_i\|$ gives~\eqref{eq:iss_quad}. Together with the negative-definite position term this gives $\dot V < 0$ outside an $\mathcal{O}(\bar e)$ neighborhood of the equilibrium. When $\bar e \to 0$ along trajectories approaching the goal, the radius of this neighborhood shrinks to zero and asymptotic stability follows.
\end{proof}

Applying \autoref{lem:iss_perturbation} to the genuine DMPC feedback~\eqref{eq:perturbed_lqr} recovers the fixed-horizon stability result of~\cite{adaptive_dmpc_2026} as its first special case.

\begin{corollary}[Stability of the Adaptive DMPC]\label{cor:dmpc_stability}
Under $\alpha < \alpha_{\mathrm{c}}$, $N < N_{\mathrm{c}}^{\mathrm{v}}$, $H \geq H_{\min}^{\mathrm{feas}}$, and the coupling condition
\begin{align}
    \beta < \beta_{\mathrm{c}} = \frac{c_v}{\gamma_{\mathrm{eff}}\, C},
\end{align}
the fixed-horizon adaptive DMPC with asynchronous Gauss--Seidel updates is asymptotically stable about $\{(\bar{\mathbf{p}}_i, \mathbf{0})\}_{i=1}^N$.
\end{corollary}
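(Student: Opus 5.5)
The plan is to apply \autoref{lem:iss_perturbation} to the genuine DMPC feedback supplied by \autoref{prop:perturbed_lqr}, with the perturbation $\mathbf{e}_i = \boldsymbol{\rho}_i(H) + \eta_i$. Two things must be checked. First, the closed loop is well defined: every local problem with $H \geq H_{\min}^{\mathrm{feas}}$ under $N < N_{\mathrm{c}}^{\mathrm{v}}$ and $\alpha < \alpha_{\mathrm{c}}$ is feasible, by the recalled result of \autoref{ssec:recalled}. The distributed iteration then reaches a fixed point, since the Gauss--Seidel update map contracts at rate $\beta < 1$. Second, the perturbation bound $\bar e$ tends to zero along trajectories approaching the goal, which is the hypothesis of the last sentence of \autoref{lem:iss_perturbation}.

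The steps, in order, are as follows.

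\begin{enumerate}
\item Fix the horizon $H \geq H_{\min}^{\mathrm{feas}}$. Invoke per-step feasibility and the fixed-point representation~\eqref{eq:perturbed_lqr}. This gives $\mathbf{u}_i^* = \mathbf{u}_i^{\mathrm{lqr}} + \mathbf{e}_i$ with $\|\mathbf{e}_i\| \leq \|\boldsymbol{\rho}_i(H)\| + C\beta$ on the constraint-admissible set, where $\|\boldsymbol{\rho}_i(H)\| \leq \bar\rho$.
\item Substitute this into~\eqref{eq:iss_perturb}. The coupling contribution to the velocity summand is $2\gamma_{\mathrm{eff}} C\beta\|\mathbf{v}_i\|$. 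The condition $\beta < \beta_{\mathrm{c}} = c_v/(\gamma_{\mathrm{eff}} C)$ is what keeps the coupling part of the ISS gain strictly smaller than the nominal velocity dissipation $c_v$. The threshold $2\gamma_{\mathrm{eff}}C\beta/c_v$ is then below $2$, so the coupling-induced ultimate bound is a contracting fraction of the state scale rather than a free constant.
\item Use the clause of \autoref{prop:perturbed_lqr} that both $\boldsymbol{\rho}_i(H)$ and $\eta_i$ vanish as the state approaches $\mathbf{x}^\star$. Near the goal the separation constraints are inactive and $r_i \to r_{\min}$, so the local problem is the unconstrained LQ problem. There the finite-horizon solution differs from the LQR law by a term linear in the state, and the coupling enters only through active constraints.
\item Conclude from \autoref{lem:iss_perturbation}: $\dot V < 0$ outside a neighborhood whose radius is $\mathcal{O}(\bar e)$. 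That radius shrinks to zero as $\bar e \to 0$, which gives asymptotic stability of $\{(\bar{\mathbf{p}}_i,\mathbf{0})\}$. The $\mathcal{O}(\Delta t)$ sample-and-hold mismatch is absorbed into $\mathbf{e}_i$ in the same way.
\end{enumerate}

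The main obstacle is step 3 combined with step 2: turning \emph{vanishing} perturbations into genuine descent rather than mere practical stability. A uniform bound $\bar e$ only yields an $\mathcal{O}(\bar e)$ ultimate set. To close the argument, the first attempt would be a two-phase proof.

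\begin{itemize}
\item \emph{Phase one (far from the goal).} The uniform bounds and the $\beta < \beta_{\mathrm{c}}$ margin drive the state into a sublevel set of $V$ on which all constraints are inactive. This uses the gap $\alpha < \alpha_{\mathrm{c}}$, which guarantees that the radii can contract.
\item \emph{Phase two (inside that sublevel set).} Bound $\|\mathbf{e}_i\| \leq \kappa(\|\mathbf{p}_i - \bar{\mathbf{p}}_i\| + \|\mathbf{v}_i\|)$ with $\kappa$ small enough that the cross terms $2\gamma_{\mathrm{eff}}\|\mathbf{v}_i\|\|\mathbf{e}_i\|$ are dominated by $c_p\|\mathbf{p}_i - \bar{\mathbf{p}}_i\|^2 + c_v\|\mathbf{v}_i\|^2$ via Young's inequality. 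This gives $\dot V \leq -\lambda' V$ locally.
\end{itemize}

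The point to verify carefully is that the linear-in-state bound on $\boldsymbol{\rho}_i(H)$ holds with a small constant for $H \geq H_{\min}^{\mathrm{feas}}$. Here I would rely on the exponential convergence of the finite-horizon Riccati gain to the LQR gain, arguing that $H_{\min}^{\mathrm{feas}}$ is long enough. If that fails, I would settle for the statement as proved in~\cite{adaptive_dmpc_2026} and cite it for the vanishing rate.
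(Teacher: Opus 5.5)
Your steps 1--4 reproduce the paper's proof. The paper bounds $\bar e\le\sup_i\|\boldsymbol{\rho}_i(H)\|+C\beta$ via \autoref{prop:perturbed_lqr} and obtains an $\mathcal{O}(\bar e)$ ultimate bound from \autoref{lem:iss_perturbation}. It then infers asymptotic stability from the fact that $\boldsymbol{\rho}_i$ and $\eta_i$ vanish near equilibrium, and reads $\beta<\beta_{\mathrm{c}}$ as $2\gamma_{\mathrm{eff}}C\beta<2c_v$. The paper's proof goes no further than your step 4. The obstacle you raise is therefore a looseness the paper shares rather than one it resolves: a uniform $\bar e$ yields only an $\mathcal{O}(\bar e)$ ultimate set, and a perturbation that vanishes at the goal helps only once the state is near the goal.

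Your two-phase repair does not close this gap as written.

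\textbf{Phase two.} This is the right local tool. Given $\|\mathbf{e}_i\|\le\kappa(\|\mathbf{p}_i-\bar{\mathbf{p}}_i\|+\|\mathbf{v}_i\|)$, Young's inequality absorbs $2\gamma_{\mathrm{eff}}\|\mathbf{v}_i\|\|\mathbf{e}_i\|$ once $\gamma_{\mathrm{eff}}\kappa<c_p$ and $3\gamma_{\mathrm{eff}}\kappa<c_v$.

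\textbf{Phase one.} This is where the gap lies. \autoref{lem:iss_perturbation} with $\bar e=\bar\rho+C\beta$ only brings the state into a neighborhood of the goal whose size scales with $\bar e$. Nothing places that neighborhood inside the sublevel set where the separation constraints are inactive and your linear bound holds. $\bar\rho$ is known only to be finite, from $\|\mathbf{u}_i^*\|\le U_{\max}$ and boundedness of $\mathbf{u}_i^{\mathrm{lqr}}$ on the admissible set. The condition $\beta<\beta_{\mathrm{c}}$ merely caps $C\beta$ by $c_v/\gamma_{\mathrm{eff}}$, and this cap is never compared with the size of that set. You would need either an explicit smallness condition on $\bar\rho+C\beta$ relative to the constraint-free neighborhood of the goal, or a separate liveness (no-deadlock) argument for leaving the constraint-active region.

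\textbf{Step 2.} Your reading of the coupling condition is not supported by \autoref{prop:perturbed_lqr}. With the absolute bound $\|\eta_i\|\le C\beta$, the threshold $2\gamma_{\mathrm{eff}}C\beta/c_v$ is a fixed constant, not a fraction of the state. Even under the favorable reading $\|\eta_i\|\le C\beta\|\mathbf{v}_i\|$, domination in~\eqref{eq:iss_perturb} requires $c_v>2\gamma_{\mathrm{eff}}C\beta$, i.e., $\beta<\beta_{\mathrm{c}}/2$. In your phase two $\eta_i=0$ anyway, so the condition does no work there.

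\textbf{Smallness of $\kappa$.} Your doubt here is justified. $H_{\min}^{\mathrm{feas}}$ is fixed by braking geometry, not by the Riccati recursion, so $H\ge H_{\min}^{\mathrm{feas}}$ does not make the finite-horizon gain close to the LQR gain. Near the goal you could instead check stability of the linear closed loop under the finite-horizon gain directly.
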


\begin{proof}
By \autoref{prop:perturbed_lqr} the realized feedback is $\mathbf{u}_i^{\mathrm{lqr}} + \mathbf{e}_i$ with $\mathbf{e}_i = \boldsymbol{\rho}_i(H) + \eta_i$, so $\bar e \leq \sup_i\|\boldsymbol{\rho}_i(H)\| + C\beta$. \autoref{lem:iss_perturbation} gives practical stability with ultimate bound $\mathcal{O}(\bar e)$. Since $\boldsymbol{\rho}_i(H)$ and $\eta_i$ both vanish as the system approaches equilibrium, $\bar e \to 0$ and the asymptotic stability follows. The condition $\beta < \beta_{\mathrm{c}}$ is exactly $2\gamma_{\mathrm{eff}} C\beta < 2 c_v$, ensuring the coupling contribution is dominated by the velocity dissipation.
\end{proof}

\subsection{Choice of the Horizon Bounds}\label{ssec:bounds}

The variable horizon is confined to the band $[H_{\min}, H_{\max}]$ fixed once before deployment, and only the lower bound enters the guarantees. The minimum is chosen to meet the feasibility requirement~\eqref{eq:hmin_feas}, $H_{\min} \geq H_{\min}^{\mathrm{feas}}$. The maximum $H_{\max}$ sets the prediction window $t_{\max} = H_{\max}\Delta t$ over which conflicts are anticipated and caps the worst-case per-step computation. It affects neither feasibility nor stability and need only satisfy $H_{\min} \leq H_{\max} < \infty$. A conflict beyond $t_{\max}$ is simply detected at a later step once it enters the window, while the lower bound $H_{\min}$ keeps every window long enough for a braking maneuver~\eqref{eq:hmin_feas}. A larger $H_{\max}$ yields earlier anticipation at higher peak cost. We therefore treat both bounds as fixed design constants meeting \autoref{ass:hmin} and carry no further assumption on the horizon into the analysis.

\subsection{Feasibility and Stability under the Variable Horizon}\label{ssec:var_guarantees}

The variable horizon threatens feasibility at exactly one point. When the horizon grows between steps, the shifted previous solution falls short of the new subproblem's length, and since the framework carries no terminal set, no terminal controller is available to extend it in the standard way. This subsection establishes that feasibility survives every admissible horizon change nonetheless. From~\cite{adaptive_dmpc_2026} we import two facts. A collision-free configuration exists at density $N \leq N_{\mathrm{c}}^{\mathrm{v}}$ by the packing bound~\cite{drones10020139}, and per-step solver feasibility holds for every constant horizon $H \geq H_{\min}^{\mathrm{feas}}$~\eqref{eq:hmin_feas}, which we state as \autoref{ass:fixed_feas}. What we prove is that the time-varying horizon preserves this property, constructively via the truncated shift when the horizon shrinks (\autoref{lem:braking}) and by \autoref{ass:fixed_feas} when it grows.

\begin{assumption}[Per-step fixed-horizon feasibility]\label{ass:fixed_feas}
At density $N < N_{\mathrm{c}}^{\mathrm{v}}$, each drone's fixed-horizon subproblem is feasible from every constraint-admissible state reached in closed loop, for every constant horizon $H \geq H_{\min}^{\mathrm{feas}}$~\cite{adaptive_dmpc_2026}.
\end{assumption}

\begin{lemma}[Admissible-Horizon Feasibility]\label{lem:braking}
Let $N < N_{\mathrm{c}}^{\mathrm{v}}$, \autoref{ass:hmin}, and \autoref{ass:fixed_feas} hold. Then each drone's subproblem is feasible for every $H \in [H_{\min}, H_{\max}]$, and the one-step-shifted previous solution, truncated to any $H' \in [H_{\min}, H-1]$, remains a feasible point of the shorter subproblem.
\end{lemma}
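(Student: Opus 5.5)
The lemma has two claims, and I would prove them separately.

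\emph{Feasibility for every admissible $H$.} Every horizon in the band satisfies $H \geq H_{\min} \geq H_{\min}^{\mathrm{feas}}$ by \autoref{ass:hmin}. Every state the closed loop reaches is constraint-admissible, because the applied input is the first element of a feasible plan, so the speed and airspace limits hold at the next sample. \autoref{ass:fixed_feas} applies to each constant $H \geq H_{\min}^{\mathrm{feas}}$ from such a state. The subproblem at step $k$ with horizon $H_i(k)$ is exactly a fixed-horizon subproblem with that constant length, since the scheme changes only $H$ (\autoref{ssec:integration}). Hence it is feasible, whatever horizon the policy picked at earlier steps. I would state this as an induction on $k$: constraint-admissibility of $\mathbf{x}_i(k)$ gives feasibility at $k$, and applying the first feasible input gives constraint-admissibility of $\mathbf{x}_i(k+1)$. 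This is the step that handles a growing horizon. No terminal extension of the shifted solution is needed, because feasibility is imported directly rather than constructed.

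\emph{Truncated shift when the horizon shrinks.} Let $(\mathbf{u}_i^*(k),\dots,\mathbf{u}_i^*(k+H-1))$ be the optimal plan at step $k$ with horizon $H$, and let $\mathbf{x}_i^*(k+1),\dots,\mathbf{x}_i^*(k+H)$ be the states it generates. The candidate at step $k+1$ with horizon $H' \leq H-1$ is
\[
\mathbf{u}_i^*(k+1),\dots,\mathbf{u}_i^*(k+H').
\]
This is well defined because $k+H' \leq k+H-1$, so only stored inputs are used and no appended input is required. The nominal next state equals $\mathbf{x}_i^*(k+1)$, so by the dynamics~\eqref{eq:dynamics} the candidate reproduces the old predicted states $\mathbf{x}_i^*(k+1+h)$ for $h=0,\dots,H'-1$. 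Each such state and input already met the bounds $\|\mathbf{u}\| \leq U_{\max}$ and $\|\mathbf{v}\| \leq V_{\max}$. They also met the airspace constraint $B_{r_i}(\mathbf{p}_i) \subset \Omega$, since $r_i$ is a function of $\mathbf{v}_i$ alone by~\eqref{eq:adaptive_radius}. Dropping the tail removes constraints and adds none, so all these constraints carry over.

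\textbf{Main obstacle.} The separation constraint~\eqref{eq:separation} is posed against the neighbors' predicted lines $\hat{\mathbf{p}}_j$, and these are refit at step $k+1$ from new observations. The old plan was checked only against the old lines. I would try two routes, in this order. First, interpret the separation constraint along the Gauss--Seidel fixed point, where the neighbor trajectories are the neighbors' own shifted plans. Those are themselves truncated shifts, so pairwise separation between shifted plans holds stage by stage because it held at step $k$. If that interpretation is not available, I would fall back on the remark that the truncated shift is a feasible point whenever neighbor predictions are consistent with the previous iterate. In any case the first claim already guarantees that the subproblem itself is feasible by \autoref{ass:fixed_feas}, so the shift only serves as a constructive warm start. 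I would state that caveat explicitly in the proof.
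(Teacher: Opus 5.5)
Your proposal follows the paper's proof, which is two sentences long. The first claim comes from $H \geq H_{\min} \geq H_{\min}^{\mathrm{feas}}$ (\autoref{ass:hmin}) together with \autoref{ass:fixed_feas}. The second comes from the remark that ``the constraints are stage-wise,'' so the length-$H'$ prefix of the shifted trajectory satisfies the horizon-$H'$ subproblem. Your induction on $k$ just unpacks the phrase ``reached in closed loop'' in \autoref{ass:fixed_feas}. Your index check ($k+H' \leq k+H-1$, so every reused state and input was already constrained at step $k$) is what makes the prefix argument valid for the dynamics, actuation, speed, and airspace constraints.

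The obstacle you flag is genuine, and the paper's proof does not resolve it. ``Stage-wise'' tacitly assumes that the step-$(k+1)$ constraints are simply the step-$k$ ones shifted by one stage. Drop your first route, for two reasons.
\begin{itemize}
\item In this paper the separation constraint is posed against the constant-velocity lines $\hat{\mathbf{p}}_j$ that drone $i$ refits at every step from observed positions, with the speed floor $\nu V_{\max}$, and no drone transmits its plan (\autoref{ssec:prediction}).
\item Even with plan exchange, the neighbors' step-$(k+1)$ trajectories are re-optimized rather than truncated shifts. Separation among the shifted plans at step $k$ therefore certifies nothing about the new constraint.
\end{itemize}

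The argument needs $\hat{\mathbf{p}}_j^{(k+1)}(h\Delta t) = \hat{\mathbf{p}}_j^{(k)}((h+1)\Delta t)$, with $r_j$ unchanged, for all $h < H'$. This fails as soon as the neighbor leaves its old line or the least-squares refit changes $\tilde{\mathbf{v}}_j$. When it fails, a shifted plan that was active on $r_i + r_j$ against the old line can violate separation against the refit one, and at a crossing such constraints are typically active. So the second claim holds only in one of two restricted forms: under your second route's consistency hypothesis, or with the separation constraint excluded. State it that way rather than as an unconditional fact.

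As you observe, nothing downstream is lost. The proof of \autoref{thm:var_guarantees} takes per-step feasibility from the first claim alone and uses the shifted plan only as a warm start.
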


\begin{proof}
Every $H \in [H_{\min}, H_{\max}]$ satisfies $H \geq H_{\min}^{\mathrm{feas}}$ by \autoref{ass:hmin}, so feasibility at each step is \autoref{ass:fixed_feas}. The constraints are stage-wise, so the length-$H'$ prefix of the shifted feasible trajectory satisfies all constraints of the horizon-$H'$ subproblem.
\end{proof}

The Lyapunov function~\eqref{eq:lyapunov} depends only on the physical state, and the perturbation bound of \autoref{lem:iss_perturbation} depends on the horizon only through the residual $\boldsymbol{\rho}_i(H)$ of \autoref{prop:perturbed_lqr}. Since that residual is bounded by $\bar\rho$ uniformly in the horizon over the constraint-admissible set, its worst case over the band is $\bar\rho$, uniformly in time. This is why the time-varying horizon preserves the guarantees.

\begin{theorem}[Feasibility and Stability under the Variable Horizon]\label{thm:var_guarantees}
Let $N < N_{\mathrm{c}}^{\mathrm{v}}$ and let \autoref{ass:hmin} and \autoref{ass:fixed_feas} hold. Then for every horizon sequence $\{H_i(k)\}_{k\geq 0} \subset [H_{\min}, H_{\max}]$ produced by \autoref{def:horizon}, the distributed adaptive MPC is feasible at every step. If in addition $\alpha < \alpha_{\mathrm{c}}$ and $\beta < \beta_{\mathrm{c}}$ as in \autoref{cor:dmpc_stability}, the closed loop with asynchronous Gauss--Seidel updates is asymptotically stable about $\{(\bar{\mathbf{p}}_i, \mathbf{0})\}_{i=1}^N$.
\end{theorem}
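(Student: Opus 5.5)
The proof splits into a feasibility part and a stability part. Both rest on one fact: the policy of \autoref{def:horizon} only ever returns horizons in the band $[H_{\min}, H_{\max}]$.

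\textbf{Band membership.} I first record that $H_i(k) \in [H_{\min}, H_{\max}]$ for every $i$ and $k$. If $\mathcal{C}_i = \emptyset$, then $H_i(k) = H_{\min}$ directly. Otherwise the clip in~\eqref{eq:horizon} enforces the band, and by the argument in \autoref{prop:minimal_horizon} the upper clip is never even active. Under \autoref{ass:hmin} this gives $H_i(k) \geq H_{\min}^{\mathrm{feas}}$ for every $i$ and $k$. Nothing about the conflict set $\mathcal{C}_i$ enters beyond this membership, so the argument holds for any sequence in the band, and in particular for any misprediction.

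\textbf{Feasibility.} I argue by induction on $k$. The induction hypothesis is that the closed-loop state at step $k$ is constraint-admissible and collision-free. For the base case, the initial configuration is assumed admissible; a collision-free configuration exists at $N < N_{\mathrm{c}}^{\mathrm{v}}$ by the packing bound.

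For the step, \autoref{lem:braking} with $H = H_i(k)$ gives feasibility of each drone's subproblem. I then split on how the horizon changes between consecutive steps.
\begin{itemize}
\item If $H_i(k+1) < H_i(k)$, the truncated shifted solution is an explicit feasible point by the second part of \autoref{lem:braking}.
\item If $H_i(k+1) = H_i(k)$, the standard shift argument applies.
\item If $H_i(k+1) > H_i(k)$, the shifted solution is too short, and no terminal set is available to extend it. Here I would not try to extend it. Instead I invoke \autoref{ass:fixed_feas}: it applies to every constraint-admissible state reached in closed loop, for any constant $H \geq H_{\min}^{\mathrm{feas}}$, and the problem at step $k+1$ is just such a constant-horizon problem.
\end{itemize}
Applying the first input keeps the state admissible, because the stage constraints at $h=0$ include the speed, airspace and separation constraints. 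This closes the induction.

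\textbf{Stability.} At each step the realized input is $\mathbf{u}_i^{\mathrm{lqr}} + \boldsymbol{\rho}_i(H_i(k)) + \eta_i$ by \autoref{prop:perturbed_lqr}, now with a time-varying argument $H_i(k)$. The certificate $V$ in~\eqref{eq:lyapunov} does not depend on $H$, and the bound $\|\boldsymbol{\rho}_i(H)\| \leq \bar\rho$ holds uniformly in $H$. Hence the perturbation satisfies
\[
\bar e(k) \leq \max_{H \in [H_{\min}, H_{\max}]} \|\boldsymbol{\rho}_i(H)\| + C\beta \leq \bar\rho + C\beta
\]
uniformly in time. \autoref{lem:iss_perturbation} then gives~\eqref{eq:iss_perturb} along the switched closed loop, and therefore practical stability. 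The coupling term is dominated by the dissipation under $\beta < \beta_{\mathrm{c}}$, exactly as in \autoref{cor:dmpc_stability}.

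\textbf{Main obstacle.} The hard step is upgrading practical to asymptotic stability. This needs $\bar e \to 0$ along trajectories even though $H$ keeps switching. A uniform bound is not enough; I need the vanishing near equilibrium to be uniform over the finite band.

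I would argue this as follows. Near the goal the separation constraints are inactive and the radii equal $r_{\min}$. There each local problem is an unconstrained finite-horizon LQ problem, so $\boldsymbol{\rho}_i(H)$ is linear in the state with gain $K_H - K_{\mathrm{lqr}}$. Taking the maximum of the finitely many gain norms for $H \in [H_{\min}, H_{\max}]$ gives $\|\boldsymbol{\rho}_i(H)\| \leq \kappa \|\mathbf{x}_i - \mathbf{x}_i^\star\|$ uniformly in $H$.

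Since the band is finite, taking this maximum over finitely many $H$ preserves vanishing. Combined with $\eta_i \to 0$, this gives $\bar e \to 0$ uniformly in the switching sequence, and the last clause of \autoref{lem:iss_perturbation} yields asymptotic stability.

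The sample-and-hold mismatch remains to be handled. It is $\mathcal{O}(\Delta t)$ and vanishes at equilibrium, so it is absorbed into $\bar e$ in the same way.
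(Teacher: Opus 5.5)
Your proposal is correct and follows the paper's proof. Band membership plus \autoref{ass:hmin} turns every step into a fixed-horizon problem that is feasible by \autoref{lem:braking} and \autoref{ass:fixed_feas}, and stability follows from the horizon-free $V$ together with the horizon-uniform bound $\bar e \leq \bar\rho + C\beta$ in \autoref{lem:iss_perturbation}; your finite-band maximum over the LQ gains $K_H$ usefully makes explicit the uniform vanishing of $\boldsymbol{\rho}_i(H_i(k))$, which the paper only asserts via \autoref{prop:perturbed_lqr}.

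Two small slips, neither load-bearing, since \autoref{ass:fixed_feas} carries feasibility at every step just as in the paper. Without a terminal set the shifted solution has only $H_i(k)-1$ stages, so the case $H_i(k+1) = H_i(k)$ also needs padding rather than a ``standard shift'' (the paper treats it that way). Also, the next state is constrained at $h=1$, not $h=0$, and separation is imposed only against the predicted lines $\hat{\mathbf{p}}_j$, so your induction propagates constraint-admissibility but not actual collision-freeness.
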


\begin{proof}
\emph{Feasibility:} Feasibility at every step follows for any horizon sequence, using only that every selected horizon lies in the band $[H_{\min}, H_{\max}]$. Each step is thus a fixed-horizon subproblem at $H_i(k)$, feasible at $N < N_{\mathrm{c}}^{\mathrm{v}}$ by \autoref{lem:braking}.

Horizon changes between steps affect only the warm start. Shifting drone $i$'s step-$k$ solution forward gives a feasible length-$(H_i(k){-}1)$ trajectory (\autoref{lem:braking}), truncated when $H_i(k{+}1)$ is shorter and padded when it is longer. The longer subproblem is feasible by \autoref{lem:braking}. The padded warm start only initializes the solver and carries no feasibility burden. This is the warm-start handling of \autoref{ssec:integration}. The argument applies to each drone individually, and the Gauss--Seidel contraction of \autoref{cor:dmpc_stability} is a per-step property, so convergence to a feasible fixed point is unaffected.

\emph{Stability:} On each sampling interval the active horizon is a fixed value $H_i(k) \in [H_{\min}, H_{\max}]$, so the realized feedback is $\mathbf{u}_i^{\mathrm{lqr}} + \mathbf{e}_i$ with $\mathbf{e}_i = \boldsymbol{\rho}_i(H_i(k)) + \eta_i$ as in~\eqref{eq:perturbed_lqr}. Because the residual bound $\bar\rho$ is uniform in the horizon, $\|\boldsymbol{\rho}_i(H_i(k))\| \leq \bar\rho$ for every admissible horizon, uniformly over the constraint-admissible set, so $\bar e \leq \bar\rho + C\beta$ uniformly in $i$ and $k$. This bound is independent of which horizon the policy selects, and \autoref{lem:iss_perturbation} therefore applies with a single $\bar e$ on every interval, using the common, horizon-free Lyapunov function~\eqref{eq:lyapunov}. In particular, because $V$ does not depend on the horizon, it is continuous across the switching instants, and a switch merely moves $\boldsymbol{\rho}_i$ within the fixed bound. No dwell-time condition is therefore needed. Far from equilibrium, feasibility at every step holds as established above, and the speed and airspace constraints confine each trajectory to the compact constraint-admissible set, on which $\bar e$ is finite and uniform in time. By~\eqref{eq:iss_quad}, $V$ decreases strictly until the trajectory enters an $\mathcal{O}(\bar e)$ neighborhood of the goal configuration. There $\boldsymbol{\rho}_i$ and $\eta_i$ vanish as the state approaches equilibrium (\autoref{prop:perturbed_lqr}), so $\bar e$ contracts and the radius of the neighborhood shrinks to zero. Asymptotic stability follows as in \autoref{cor:dmpc_stability}.
\end{proof}

\begin{remark}[Variable horizon and the coupling condition]\label{rem:beta_relax}
We expect the coupling margin to improve rather than degrade under the variable horizon. Enlarging the horizon precisely when a conflict is anticipated lets the controller resolve it earlier and with weaker constraint activation, which in practice weakens the coupling between neighboring subproblems and thus reduces the contraction rate $\beta$ of \autoref{prop:perturbed_lqr}. We do not formalize this link, and the guarantees of \autoref{thm:var_guarantees} do not rely on it.
\end{remark}

\subsection{Decoupling of Safety from Prediction Accuracy}\label{ssec:decoupling}

A central feature of the construction is that \autoref{thm:var_guarantees} invokes the predictor of \autoref{ssec:prediction}--\autoref{ssec:detection} only through the lower bound \autoref{ass:hmin}. None of the proofs assumes that the linear prediction is correct, that the funnel captures the true conflict, or that the conflict set $\mathcal{C}_i$ is complete. Whatever horizon \autoref{def:horizon} returns lies in $[H_{\min}, H_{\max}]$, and that membership alone carries the guarantees.

\begin{corollary}[Robustness to Misprediction]\label{cor:robustness}
The guarantees of \autoref{thm:var_guarantees} hold for \emph{any} horizon policy with range in $[H_{\min}, H_{\max}]$ under \autoref{ass:hmin} and \autoref{ass:fixed_feas}, irrespective of the prediction model, the history length $L$, or the funnel parameters $c_{\mathrm{d}}, \nu$.
\end{corollary}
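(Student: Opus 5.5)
The plan is to re-read the proof of \autoref{thm:var_guarantees} and check, step by step, which properties of the horizon sequence it actually uses. The claim is that it uses exactly one: for each $i$ and $k$, $H_i(k)\in[H_{\min},H_{\max}]$. Once this is established, the corollary follows by replacing \autoref{def:horizon} with an arbitrary map $\pi$ that sends the available information (histories, predictions, funnel parameters, or anything else) to an integer in $[H_{\min},H_{\max}]$. The predictor enters only through the value of $\pi$. I will therefore fix such a $\pi$ and rerun the two halves of the theorem's proof with $H_i(k)=\pi(\cdot)$.

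\emph{Feasibility.} Since $H_i(k)\geq H_{\min}\geq H_{\min}^{\mathrm{feas}}$ by \autoref{ass:hmin}, \autoref{lem:braking} together with \autoref{ass:fixed_feas} gives feasibility of each step's subproblem. \autoref{ass:fixed_feas} is stated for every constraint-admissible state reached in closed loop, independently of how the horizon was chosen. The warm-start handling (truncation of the shifted solution, or padding) is likewise independent of $\pi$, because only the truncation case is used for feasibility and \autoref{lem:braking} covers every $H'\in[H_{\min},H-1]$. The Gauss--Seidel contraction is a per-step property at fixed horizon, so it is unaffected.

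\emph{Stability.} Under $\alpha<\alpha_{\mathrm c}$ and $\beta<\beta_{\mathrm c}$, the realized input is $\mathbf u_i^{\mathrm{lqr}}+\boldsymbol\rho_i(H_i(k))+\eta_i$ by \autoref{prop:perturbed_lqr}. The bound $\|\boldsymbol\rho_i(H)\|\leq\bar\rho$ holds for every $H$ on the constraint-admissible set, and $\|\eta_i\|\leq C\beta$ does not involve the policy. Hence $\bar e\leq\bar\rho+C\beta$ uniformly in $i$, $k$ and in the choice of $\pi$. \autoref{lem:iss_perturbation} then applies with the horizon-free $V$ of \eqref{eq:lyapunov}. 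The vanishing of $\boldsymbol\rho_i$ and $\eta_i$ near equilibrium is a statement about the state, so it holds for every horizon in the band and gives asymptotic stability exactly as in \autoref{cor:dmpc_stability}.

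The main obstacle is justifying that the vanishing of $\boldsymbol\rho_i(H)$ near equilibrium is uniform over the finite band. An adversarial $\pi$ could switch horizons at every step, so I would need $\max_{H\in[H_{\min},H_{\max}]}\|\boldsymbol\rho_i(H)\|\to0$ and not just pointwise convergence for each fixed $H$. My first approach is to use finiteness of the integer band: a maximum of finitely many quantities, each vanishing as the state tends to equilibrium, also vanishes. This closes the gap without needing any continuity in $H$. Finally, I would note that $L$, $c_{\mathrm d}$ and $\nu$ appear only inside \eqref{eq:linear_pred}--\eqref{eq:conflict}, which feed nothing except the value of $\pi$.
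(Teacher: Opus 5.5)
Your proposal is correct and follows the paper's argument: the corollary rests on the observation that the proof of \autoref{thm:var_guarantees} uses the horizon sequence only through the membership $H_i(k)\in[H_{\min},H_{\max}]$. Feasibility comes from \autoref{ass:hmin}, \autoref{ass:fixed_feas}, and \autoref{lem:braking}, and stability from the horizon-uniform bound $\|\boldsymbol{\rho}_i(H)\|\leq\bar\rho$ together with the horizon-free $V$, so any policy with that range inherits both guarantees. Your extra step, taking the maximum of $\|\boldsymbol{\rho}_i(H)\|$ over the finitely many integer horizons in the band so that the residual vanishes near equilibrium even under arbitrary switching, makes explicit a point the paper leaves implicit.
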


This robustness concerns feasibility and stability, not the separation margin. A predictor that never reports a conflict keeps the horizon at $H_{\min}$ and retains the guarantees under \autoref{ass:hmin} and \autoref{ass:fixed_feas}, but the drones then pass closer to the contact boundary (\autoref{ssec:separation}). Prediction quality thus governs \emph{efficiency} rather than safety: it selects \emph{which} horizon in the certified band is used, trading computation against anticipation. The link to anticipation is made precise by the conflict-resolution budget of~\cite{drones10020139}: with the per-lane headway $h_{\min} = 2r/V_{\max} + V_{\max}/U_{\max}$, a horizon $H$ resolves up to $\kappa(H) = \lfloor H\Delta t / h_{\min}\rfloor$ sequential conflicts. A conflict predicted at time to conflict $\tau^*_{ij}$ raises the horizon to at least $\lceil \tau^*_{ij}/\Delta t\rceil$ by~\eqref{eq:horizon}, enlarging $\kappa$ exactly where a conflict must be scheduled, while a clear neighborhood keeps $H = H_{\min}$ and the per-step cost minimal. A misprediction can only waste budget, through an unnecessarily long or short horizon within the band. It never violates safety.

\section{Extension to Quadrotor Dynamics}\label{sec:extended_models}

The analysis so far uses the double-integrator model~\eqref{eq:dynamics}. We now extend all results to two higher-fidelity quadrotor models: the linearization about hover (\autoref{ssec:lin_quad}) and the full nonlinear rigid body (\autoref{ssec:nonlin_quad}). In both cases a cascaded inner--outer-loop controller reduces the translational dynamics to a perturbed double integrator, and each model instantiates the bound $\bar e$ of \autoref{lem:iss_perturbation} with a concrete residual. The extension is therefore immediate, and because \autoref{lem:iss_perturbation} is horizon-uniform, it carries the variable horizon along unchanged. The two models are representatives of a class rather than endpoints: the analysis covers any copter model whose additional effects perturb the translational loop by a bounded residual, a point we make precise at the end of \autoref{ssec:nonlin_quad}.

\subsection{Linearized Quadrotor with Cascaded Control}\label{ssec:lin_quad}

A quadrotor has state $\mathbf{x}_{\mathrm{quad}} = (\mathbf{p}^\top, \mathbf{v}^\top, \boldsymbol{\Theta}^\top, \boldsymbol{\omega}^\top)^\top \in \mathbb{R}^{12}$ with Euler angles $\boldsymbol{\Theta} = (\phi, \theta, \psi)^\top$ and body rates $\boldsymbol{\omega}$. Linearizing the rigid-body equations about hover ($\mathbf{v} = 0$, $\boldsymbol{\Theta} = 0$, $\boldsymbol{\omega} = 0$, $T = mg$) yields the linear time-invariant system
\begin{align}\label{eq:lin_quad}
    \dot{\mathbf{x}}_{\mathrm{quad}} = \mathbf{A}\,\mathbf{x}_{\mathrm{quad}} + \mathbf{B}\,\mathbf{u}_{\mathrm{quad}},
\end{align}
with input $\mathbf{u}_{\mathrm{quad}} = (\delta T, \tau_\phi, \tau_\theta, \tau_\psi)^\top$ and the well-known decoupled channel structure~\cite{mahony2012multirotor}
\begin{align}
    \mathbf{A} = \begin{pmatrix}
        \mathbf{0}_{3} & \mathbf{I}_{3} & \mathbf{0}_{3} & \mathbf{0}_{3} \\
        \mathbf{0}_{3} & \mathbf{0}_{3} & \mathbf{G} & \mathbf{0}_{3} \\
        \mathbf{0}_{3} & \mathbf{0}_{3} & \mathbf{0}_{3} & \mathbf{I}_{3} \\
        \mathbf{0}_{3} & \mathbf{0}_{3} & \mathbf{0}_{3} & \mathbf{0}_{3}
    \end{pmatrix}, \quad
    \mathbf{B} = \begin{pmatrix}
        \mathbf{0}_{3\times4} \\
        \mathbf{B}_v \\
        \mathbf{0}_{3\times4} \\
        \mathbf{B}_\omega
    \end{pmatrix},
\end{align}
where
\begin{align}
    \mathbf{G} &= \begin{pmatrix}
        0 & g & 0 \\
        -g & 0 & 0 \\
        0 & 0 & 0
    \end{pmatrix}, \quad
    \mathbf{B}_v = \begin{pmatrix} 0 \\ 0 \\ 1/m \end{pmatrix} \mathbf{e}_1^\top, \nonumber \\
    \mathbf{B}_\omega &= \mathrm{diag}\!\Big(\tfrac{1}{I_{xx}}, \tfrac{1}{I_{yy}}, \tfrac{1}{I_{zz}}\Big) \begin{pmatrix} \mathbf{0}_{3\times1} & \mathbf{I}_3 \end{pmatrix},
\end{align}
with gravitational acceleration $g$, mass $m$, and principal moments of inertia $I_{xx}, I_{yy}, I_{zz}$. The matrix $\mathbf{G}$ captures the gravity coupling: a small pitch $\theta$ produces a horizontal acceleration $g\theta$ and a small roll $\phi$ produces $-g\phi$. The horizontal channels $(x, \dot x, \theta, q)$ and $(y, \dot y, \phi, p)$ each have relative degree four from the torque inputs, and the vertical channel $(z, \dot z)$ relative degree two from the thrust input, as derived for an experimental quadrotor in~\cite{2019quadrotor}, so the position channels are controllable and feedback-linearizable~\cite{khalil_nonlinear_2002}.

Standard quadrotor practice~\cite{baca2018mpc, 2019quadrotor} uses a cascaded architecture: a fast inner-loop attitude controller of bandwidth $\omega_{\mathrm{i}} \gg 1/\Delta t$ tracks the commanded attitude, $\boldsymbol{\Theta}(t) \approx \boldsymbol{\Theta}_{\mathrm{cmd}}(t)$ with time constant $\tau_{\mathrm{i}} \ll \Delta t$, while the outer loop is the position DMPC. Under time-scale separation~\cite{khalil_nonlinear_2002}, the translational dynamics reduce to
\begin{align}\label{eq:outer_loop}
    \ddot{\mathbf{p}}_i = \mathbf{u}_i + \boldsymbol{\xi}_i(t), \qquad \|\boldsymbol{\xi}_i(t)\| \leq \bar\xi = \mathcal{O}(\tau_{\mathrm{i}}/\Delta t) \ll 1,
\end{align}
that is, the double integrator~\eqref{eq:dynamics} perturbed by the bounded inner-loop residual $\boldsymbol{\xi}_i$. We write $\tilde U_{\max} = U_{\max} - \bar\xi$ for the effective acceleration available to the outer loop.

Feasibility carries over with $U_{\max}$ replaced by $\tilde U_{\max}$: the density bound $N_{\mathrm c}^{\mathrm v}$~\eqref{eq:n_crit_var} is purely geometric and unchanged, the minimum horizon becomes $\tilde H_{\min}^{\mathrm{feas}} = \lceil \tfrac{1}{\Delta t}\sqrt{2 r_{\min}\alpha/\tilde U_{\max}}\,\rceil$, i.e., the bound~\eqref{eq:hmin_feas} with $\tilde U_{\max}$, and the admissible-horizon feasibility of \autoref{lem:braking} carries over under the reduced acceleration. Stability follows directly from the perturbation lemma.

\begin{theorem}[DMPC Stability under Linearized Quadrotor Dynamics]\label{thm:quad_lin}
Consider the adaptive DMPC with the linearized quadrotor dynamics~\eqref{eq:lin_quad} and cascaded control~\eqref{eq:outer_loop}. Under $\alpha < \tilde\alpha_{\mathrm c} = 2 r_{\min}\tilde U_{\max}/V_{\max}^2$, $N < N_{\mathrm c}^{\mathrm v}$, $H \geq \tilde H_{\min}^{\mathrm{feas}}$, and the combined perturbation condition
\begin{align}\label{eq:quad_lin_cond}
    C\beta + \bar\xi < \frac{c_v}{2\gamma_{\mathrm{eff}}},
\end{align}
the closed loop is asymptotically stable about $\{(\bar{\mathbf{p}}_i, \mathbf{0})\}_{i=1}^N$.
\end{theorem}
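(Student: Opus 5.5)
The plan is to reduce the statement to \autoref{lem:iss_perturbation} by treating the inner-loop residual $\boldsymbol{\xi}_i$ of \eqref{eq:outer_loop} as one more additive feedback perturbation. Under the cascade, the position DMPC sees the double integrator with acceleration limit $\tilde U_{\max}$. The first step is therefore to rerun the double-integrator results with $U_{\max}$ replaced by $\tilde U_{\max}$. The packing bound $N_{\mathrm c}^{\mathrm v}$ is geometric and does not change. With $\tilde H_{\min}^{\mathrm{feas}}$ in place of $H_{\min}^{\mathrm{feas}}$, \autoref{ass:fixed_feas} and \autoref{lem:braking} give feasibility at every step. The condition $\alpha<\tilde\alpha_{\mathrm c}$ plays the role of $\alpha<\alpha_{\mathrm c}$ in \autoref{prop:nominal_descent}. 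That proposition then gives the nominal descent \eqref{eq:nominal_descent}, with horizon-free constants $c_p,c_v$ computed for $\tilde U_{\max}$. We also use $\gamma_{\mathrm{eff}}$ evaluated with the same acceleration bound.

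Next, write the realized translational acceleration as
\begin{align*}
\ddot{\mathbf p}_i=\mathbf u_i^{\mathrm{lqr}}+\boldsymbol\rho_i(H)+\eta_i+\boldsymbol\xi_i(t),
\end{align*}
using \autoref{prop:perturbed_lqr} for the DMPC output. The effective perturbation is $\mathbf e_i=\boldsymbol\rho_i+\eta_i+\boldsymbol\xi_i$, with
\begin{align*}
\bar e\le \bar\rho + C\beta+\bar\xi.
\end{align*}
Differentiating $V$ from \eqref{eq:lyapunov} as in the proof of \autoref{lem:iss_perturbation} gives the cross term $2(\gamma+\delta\alpha r_i/\tilde U_{\max})\mathbf v_i^\top\mathbf e_i$. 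This yields \eqref{eq:iss_perturb} with $\bar e$ as above, so the closed loop is practically stable with an $\mathcal O(\bar e)$ ultimate bound.

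The step from practical to asymptotic stability uses \eqref{eq:quad_lin_cond}. It guarantees that the non-vanishing part of the perturbation, $C\beta+\bar\xi$, satisfies $2\gamma_{\mathrm{eff}}(C\beta+\bar\xi)<c_v$. The velocity dissipation therefore strictly dominates this part, in the same way $\beta<\beta_{\mathrm c}$ did in \autoref{cor:dmpc_stability}. The remaining residual $\boldsymbol\rho_i$ vanishes near equilibrium by \autoref{prop:perturbed_lqr}.

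The main obstacle is the inner-loop residual $\boldsymbol\xi_i$. Unlike $\boldsymbol\rho_i$ and $\eta_i$, it is only stated to be bounded, so $\bar e\to0$ does not follow from \eqref{eq:outer_loop} as written. My first approach is to argue that the attitude tracking error is driven by the commanded attitude rate. Near hover the commanded acceleration $\mathbf u_i$ vanishes, so $\boldsymbol\Theta_{\mathrm{cmd}}$ and its rate vanish. For the linearized LTI cascade this should give $\|\boldsymbol\xi_i\|\le \bar\xi\,\kappa(\|\mathbf x_i-\mathbf x_i^\star\|)$ with $\kappa$ a class-$\mathcal K$ function, i.e.\ a vanishing perturbation.

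If that bound cannot be justified, I would fall back on a sector argument. Bound $\|\boldsymbol\xi_i\|$ proportionally to $\|\mathbf v_i\|$ near the goal, so that \eqref{eq:quad_lin_cond} absorbs its contribution into $-c_v\|\mathbf v_i\|^2$ and keeps $\dot V$ negative definite. Asymptotic stability then follows as in \autoref{cor:dmpc_stability}. Because $V$ and the bound $\bar e$ are horizon-free, the same argument also holds for the variable horizon of \autoref{thm:var_guarantees}.
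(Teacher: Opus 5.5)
Your proposal is correct and follows essentially the paper's route. You fold $\boldsymbol\xi_i$ into $\mathbf e_i=\boldsymbol\rho_i(H)+\eta_i+\boldsymbol\xi_i$, bound $\bar e\le\bar\rho+C\beta+\bar\xi$, apply \autoref{lem:iss_perturbation} with $U_{\max}$ replaced by $\tilde U_{\max}$, and obtain asymptotic stability because the residuals vanish near equilibrium; your ``first approach'' for $\boldsymbol\xi_i$ is the paper's own argument (the inner-loop residual is driven by the command rate, which goes to zero as $\|\mathbf v_i\|\to0$), while the sector-bound fallback is an extra safeguard the paper does not use.
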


\begin{proof}
With the cascaded controller the realized feedback is $\mathbf{u}_i^{\mathrm{lqr}} + \mathbf{e}_i$, where now $\mathbf{e}_i = \boldsymbol{\rho}_i(H) + \eta_i + \boldsymbol{\xi}_i$ collects the finite-horizon residual, the distributed coupling, and the inner-loop residual~\eqref{eq:outer_loop}. By \autoref{prop:perturbed_lqr} and~\eqref{eq:outer_loop}, $\bar e \leq \bar\rho + C\beta + \bar\xi$. \autoref{lem:iss_perturbation}, with $U_{\max}$ replaced by $\tilde U_{\max}$ throughout, gives~\eqref{eq:iss_quad}, hence $\dot V < 0$ outside an $\mathcal{O}(\bar e)$ neighborhood. Condition~\eqref{eq:quad_lin_cond} places that neighborhood inside the region where the velocity dissipation dominates the perturbation. The residuals $\boldsymbol{\rho}_i$ and $\eta_i$ vanish near equilibrium. So does $\boldsymbol{\xi}_i$, since $\|\mathbf{v}_i\| \to 0$ forces $\|\dot{\boldsymbol{\Theta}}_{\mathrm{cmd}}\| \to 0$ and the inner-loop residual is driven by the command rate. The ultimate bound therefore shrinks to zero and the equilibrium is asymptotically stable. The coupling and inner-loop perturbations enter $\bar e$ additively, hence consume the stability margin additively.
\end{proof}

In practice, modern flight controllers run attitude loops at $50$--$200\,$Hz and position MPC at $10$--$50\,$Hz, giving $\tau_{\mathrm{i}}/\Delta t \leq 0.1$ and hence a residual $\bar\xi$ that is a small fraction of $U_{\max}$. Condition~\eqref{eq:quad_lin_cond} is then satisfied by a wide margin, and $\tilde\alpha_{\mathrm c} \approx \alpha_{\mathrm c}$, $\tilde H_{\min}^{\mathrm{feas}} \approx H_{\min}^{\mathrm{feas}}$.

\subsection{Nonlinear Rigid-Body Quadrotor}\label{ssec:nonlin_quad}

The full nonlinear model has translational and rotational dynamics~\cite{mahony2012multirotor, 2019quadrotor}
\begin{align}
    m\,\ddot{\mathbf{p}} &= -mg\,\mathbf{e}_3 + T\,\mathbf{R}\,\mathbf{e}_3, \label{eq:nonlin_trans} \\
    \mathbf{J}\,\dot{\boldsymbol{\omega}} &= -\boldsymbol{\omega} \times (\mathbf{J}\,\boldsymbol{\omega}) + \boldsymbol{\tau}, \label{eq:nonlin_rot}
\end{align}
with $\mathbf{R} \in SO(3)$ the body-to-inertial rotation, $T$ the total thrust, $\mathbf{J}$ the inertia matrix, and $\boldsymbol{\tau}$ the body torque. This adds two complications relative to the double integrator: the translational acceleration depends on $\mathbf{R}$, which evolves on the manifold $SO(3)$, and the input-to-acceleration map is nonlinear in the attitude. Both are handled by establishing ISS of the attitude loop and then bounding the resulting translational perturbation.

\begin{lemma}[ISS of the Attitude Loop]\label{lem:iss_attitude}
Consider the rotational dynamics~\eqref{eq:nonlin_rot} under an attitude controller $\boldsymbol{\tau} = \boldsymbol{\tau}(\boldsymbol{\Theta}, \boldsymbol{\omega}, \boldsymbol{\Theta}_{\mathrm{cmd}})$ that exponentially stabilizes the equilibrium $(\tilde{\boldsymbol{\Theta}}, \boldsymbol{\omega}) = (\mathbf{0}, \mathbf{0})$ of the attitude error dynamics, $\tilde{\boldsymbol{\Theta}} = \boldsymbol{\Theta} - \boldsymbol{\Theta}_{\mathrm{cmd}}$, uniformly in the command, on the operating region $\|\boldsymbol{\Theta}\| \leq \Theta_{\max} < \pi/2$. Then the attitude error is locally input-to-state stable with respect to the command rate: there exist a class-$\mathcal{KL}$ function $\beta_a$ and a class-$\mathcal{K}$ function $\gamma_a$ with
\begin{align}\label{eq:iss_bound}
    \|\boldsymbol{\Theta}(t) - \boldsymbol{\Theta}_{\mathrm{cmd}}(t)\| &\leq \beta_a\big(\|\boldsymbol{\Theta}(0) - \boldsymbol{\Theta}_{\mathrm{cmd}}(0)\|, t\big) \nonumber \\
    &\quad + \gamma_a\Big(\sup_{0 \leq s \leq t} \|\dot{\boldsymbol{\Theta}}_{\mathrm{cmd}}(s)\|\Big).
\end{align}
\end{lemma}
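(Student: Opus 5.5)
The plan is to obtain \autoref{lem:iss_attitude} from the standard converse-Lyapunov route to local ISS for exponentially stable systems, following~\cite{khalil_nonlinear_2002}. I will write the attitude error dynamics in the error coordinates $\mathbf{z} = (\tilde{\boldsymbol{\Theta}}, \boldsymbol{\omega})$.

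The kinematics $\dot{\boldsymbol{\Theta}} = \mathbf{W}(\boldsymbol{\Theta})\boldsymbol{\omega}$ involve the Euler-rate matrix $\mathbf{W}$, which is smooth and nonsingular on $\|\boldsymbol{\Theta}\| \leq \Theta_{\max} < \pi/2$. This is exactly why the operating-region restriction is imposed. Differentiating $\tilde{\boldsymbol{\Theta}}$ gives
\[
\dot{\tilde{\boldsymbol{\Theta}}} = \mathbf{W}(\boldsymbol{\Theta})\boldsymbol{\omega} - \dot{\boldsymbol{\Theta}}_{\mathrm{cmd}},
\]
with the rotational dynamics~\eqref{eq:nonlin_rot} closed by $\boldsymbol{\tau}(\boldsymbol{\Theta},\boldsymbol{\omega},\boldsymbol{\Theta}_{\mathrm{cmd}})$. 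The command rate therefore enters as an additive input $\mathbf{d} = \dot{\boldsymbol{\Theta}}_{\mathrm{cmd}}$. I will view the system as $\dot{\mathbf{z}} = f(\mathbf{z}, \boldsymbol{\Theta}_{\mathrm{cmd}}) + \mathbf{E}\mathbf{d}$, where $\mathbf{E}$ is a constant selection matrix and the nominal part $\mathbf{d} = 0$ is, by hypothesis, exponentially stable at $\mathbf{z} = 0$ uniformly in the command.

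The key step is to invoke the converse Lyapunov theorem for uniform exponential stability, treating $\boldsymbol{\Theta}_{\mathrm{cmd}}(t)$ as a time-varying parameter. This yields $V_a(t,\mathbf{z})$ with
\[
c_1\|\mathbf{z}\|^2 \leq V_a \leq c_2\|\mathbf{z}\|^2, \qquad \dot V_a|_{\mathbf{d}=0} \leq -c_3\|\mathbf{z}\|^2, \qquad \|\partial V_a/\partial \mathbf{z}\| \leq c_4\|\mathbf{z}\|,
\]
valid on a ball $\|\mathbf{z}\| < r_a$ contained in the operating region. The needed smoothness follows because~\eqref{eq:nonlin_rot} and $\mathbf{W}$ are smooth and the controller is assumed locally Lipschitz.

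Along the perturbed system one then has $\dot V_a \leq -c_3\|\mathbf{z}\|^2 + c_4\|\mathbf{z}\|\,\|\mathbf{d}\|$. Splitting $c_3 = (1-\theta)c_3 + \theta c_3$ gives $\dot V_a \leq -(1-\theta)c_3\|\mathbf{z}\|^2$ whenever $\|\mathbf{z}\| \geq c_4\|\mathbf{d}\|/(\theta c_3)$. By the local ISS theorem (Khalil, Thm.~4.19 and Lemma~4.6), this gives
\[
\|\mathbf{z}(t)\| \leq k e^{-\lambda t}\|\mathbf{z}(0)\| + \gamma_a\Big(\sup_{s \leq t}\|\mathbf{d}(s)\|\Big),
\]
with $\gamma_a(r) = \sqrt{c_2/c_1}\,c_4 r/(\theta c_3)$ linear. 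The estimate holds for $\|\mathbf{z}(0)\|$ and $\sup\|\mathbf{d}\|$ below computable thresholds. Since $\|\tilde{\boldsymbol{\Theta}}\| \leq \|\mathbf{z}\|$, this yields~\eqref{eq:iss_bound} with $\beta_a(r,t) = k e^{-\lambda t} r$.

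I expect the main obstacle to be the locality bookkeeping rather than the estimate itself. The trajectory must remain in the region where $\mathbf{W}$ is invertible and the converse Lyapunov bounds hold, so that $\|\boldsymbol{\Theta}\| \leq \Theta_{\max}$ is never violated. I would handle this by restricting the initial error and the command-rate bound so that the sublevel set $\{V_a \leq c_1 r_a^2\}$ is forward invariant. For this I would require $\boldsymbol{\Theta}_{\mathrm{cmd}}$ to stay a margin inside $\Theta_{\max}$, which is consistent with the bounded commanded accelerations $\|\mathbf{u}_i\| \leq U_{\max}$ of the outer loop. The restrictions are then stated as the \emph{local} qualifier in the lemma.

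A secondary subtlety is that exponential stability holds uniformly in the command but not independently of its time variation. Treating $\boldsymbol{\Theta}_{\mathrm{cmd}}$ as a parameter that is frozen in $f$ isolates all of its time dependence in $\mathbf{d}$. This is legitimate provided the controller depends on $\boldsymbol{\Theta}_{\mathrm{cmd}}$ but not on its derivative.
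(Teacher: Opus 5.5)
Your approach is the same as the paper's, only written out in full. The paper's proof cites Khalil's result that an exponentially stable equilibrium under an additive Lipschitz perturbation is (locally) ISS with a linear gain, and identifies the perturbation as $-\dot{\boldsymbol{\Theta}}_{\mathrm{cmd}}$ entering $\dot{\tilde{\boldsymbol{\Theta}}}$. Your converse-Lyapunov construction, with $\gamma_a(r)=\sqrt{c_2/c_1}\,c_4 r/(\theta c_3)$, is exactly the proof of that cited lemma.

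\textbf{The final step fails.} Your estimate is $\|\tilde{\boldsymbol{\Theta}}(t)\|\leq k e^{-\lambda t}\|\mathbf{z}(0)\|+\gamma_a(\cdot)$. The inequality $\|\tilde{\boldsymbol{\Theta}}\|\leq\|\mathbf{z}\|$ handles the left-hand side. Replacing $\|\mathbf{z}(0)\|$ by $\|\tilde{\boldsymbol{\Theta}}(0)\|$ on the right, however, would need the reverse inequality. No argument can close this. Take $\tilde{\boldsymbol{\Theta}}(0)=\mathbf{0}$, a constant command, and a small $\boldsymbol{\omega}(0)\neq\mathbf{0}$. Then the right-hand side of \eqref{eq:iss_bound} is zero, while $\dot{\tilde{\boldsymbol{\Theta}}}(0)=\mathbf{W}(\boldsymbol{\Theta}(0))\boldsymbol{\omega}(0)\neq\mathbf{0}$. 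So the statement as written is false for nonzero initial body rates. What you and the paper actually prove is \eqref{eq:iss_bound} with $\beta_a$ evaluated at the full error state $\|(\tilde{\boldsymbol{\Theta}}(0),\boldsymbol{\omega}(0))\|$, or the stated form under $\boldsymbol{\omega}(0)=\mathbf{0}$. That corrected version is all the later use in \eqref{eq:xi_nl_bound} requires, since there this term only supplies a decaying transient.

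\textbf{A smaller point: your closing justification states the wrong condition.} The converse theorem gives $V_a(t,\mathbf{z})$ with your bounds only if $\dot{\mathbf{z}}=f(\mathbf{z},\boldsymbol{\Theta}_{\mathrm{cmd}}(t))$ is uniformly exponentially stable along the time-varying command. Frozen-command stability does not imply this, and whether the controller uses $\dot{\boldsymbol{\Theta}}_{\mathrm{cmd}}$ is beside the point. If you read the hypothesis in the frozen sense, build $V_a(\mathbf{z};\boldsymbol{\Theta}_{\mathrm{cmd}})$ instead. You must then keep the extra term $(\partial V_a/\partial\boldsymbol{\Theta}_{\mathrm{cmd}})\,\dot{\boldsymbol{\Theta}}_{\mathrm{cmd}}=\mathcal{O}(\|\mathbf{z}\|^2\|\mathbf{d}\|)$. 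This term is absorbed for small $\|\mathbf{d}\|$, consistent with the local qualifier, and leaves the linear gain intact.
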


\begin{proof}
On the operating region, an exponentially stable equilibrium under an additive Lipschitz perturbation is ISS with a linear gain~\cite[Lem.~4.6, local version Lem.~4.7]{khalil_nonlinear_2002}. Here the perturbation is $-\dot{\boldsymbol{\Theta}}_{\mathrm{cmd}}$, which the command motion adds to $\dot{\tilde{\boldsymbol{\Theta}}}$, so~\eqref{eq:iss_bound} holds with $\gamma_a$ linear.
\end{proof}
The exponential-stability hypothesis is met near hover by the standard PD~\cite{2019quadrotor}, quaternion-based, and geometric~\cite{lee2010geometric} attitude controllers.

Writing the translational dynamics~\eqref{eq:nonlin_trans} as a perturbed double integrator, $\ddot{\mathbf{p}} = \mathbf{u}_{\mathrm{nom}} + \boldsymbol{\xi}_{\mathrm{nl}}$, the nonlinear residual is
\begin{align}\label{eq:xi_nl_def}
    \boldsymbol{\xi}_{\mathrm{nl}} &= \tfrac{T}{m}\big(\mathbf{R}(\boldsymbol{\Theta})\,\mathbf{e}_3 - \mathbf{R}(\boldsymbol{\Theta}_{\mathrm{cmd}})\,\mathbf{e}_3\big) \nonumber \\
    &\quad + \big(\tfrac{T}{m}\,\mathbf{R}(\boldsymbol{\Theta}_{\mathrm{cmd}})\,\mathbf{e}_3 - g\,\mathbf{e}_3 - \mathbf{u}_{\mathrm{nom}}\big),
\end{align}
whose first term is the attitude tracking error and whose second is the thrust/attitude inversion mismatch. The cascaded controller extracts the command $(T, \boldsymbol{\Theta}_{\mathrm{cmd}})$ from the nominal acceleration by exact thrust-vector inversion, exploiting the differential flatness of the quadrotor~\cite{mellinger2011minimum},
\begin{align}\label{eq:flat_inversion}
    T = m\,\|\mathbf{u}_{\mathrm{nom}} + g\,\mathbf{e}_3\|, \qquad
    \mathbf{R}(\boldsymbol{\Theta}_{\mathrm{cmd}})\,\mathbf{e}_3 = \frac{\mathbf{u}_{\mathrm{nom}} + g\,\mathbf{e}_3}{\|\mathbf{u}_{\mathrm{nom}} + g\,\mathbf{e}_3\|},
\end{align}
with yaw commanded constant, well defined since $U_{\max} < g$ keeps the commanded thrust vector away from zero. It also discharges the operating-region hypothesis of \autoref{lem:iss_attitude}: since $\mathbf{u}_{\mathrm{nom}} + g\,\mathbf{e}_3$ lies in the ball of radius $U_{\max}$ around $g\,\mathbf{e}_3$, the commanded tilt under~\eqref{eq:flat_inversion} never exceeds $\arcsin(U_{\max}/g)$, below $18^\circ$ for the parameters of \autoref{sec:evaluation}. The bound $\Theta_{\max}$ therefore follows from the controller parameters rather than standing as a separate assumption: $\arcsin(U_{\max}/g)$ plus the tracking-error bound of~\eqref{eq:iss_bound} suffices, and stays below $\pi/2$ whenever the inner loop holds the tracking error within the margin $\pi/2 - \arcsin(U_{\max}/g)$. Under~\eqref{eq:flat_inversion} the second term of~\eqref{eq:xi_nl_def} vanishes identically. By \autoref{lem:iss_attitude}, the smoothness of $\mathbf{R}(\cdot)$ with Lipschitz constant $L_R$, and $T/m \leq g + U_{\max}$,
\begin{align}\label{eq:xi_nl_bound}
    \|\boldsymbol{\xi}_{\mathrm{nl}}(t)\| &\leq \bar\xi_{\mathrm{nl}}\big(\|\boldsymbol{\Theta}(0)\|, t\big) \nonumber \\
    &\quad + (g + U_{\max})\, L_R\, \gamma_a\Big(\sup_s \|\dot{\boldsymbol{\Theta}}_{\mathrm{cmd}}(s)\|\Big),
\end{align}
with $\bar\xi_{\mathrm{nl}}$ a decaying transient. In steady state, $\|\boldsymbol{\xi}_{\mathrm{nl}}\|_\infty \leq (g + U_{\max})\, L_R\, \gamma_a(\|\dot{\boldsymbol{\Theta}}_{\mathrm{cmd}}\|_\infty)$.

\begin{theorem}[DMPC Stability under Nonlinear Quadrotor Dynamics]\label{thm:quad_nonlin}
Consider the adaptive DMPC with the nonlinear rigid-body dynamics~\eqref{eq:nonlin_trans}--\eqref{eq:nonlin_rot} and a cascaded controller satisfying \autoref{lem:iss_attitude} with the command extraction~\eqref{eq:flat_inversion}. Under the conditions of \autoref{thm:quad_lin} with $\bar\xi$ replaced by $\|\boldsymbol{\xi}_{\mathrm{nl}}\|_\infty$, the closed loop is practically asymptotically stable about $\{(\bar{\mathbf{p}}_i, \mathbf{0})\}_{i=1}^N$, converging to a neighborhood of radius $\mathcal{O}(\|\boldsymbol{\xi}_{\mathrm{nl}}\|_\infty)$. If the inner loop achieves asymptotically exact tracking ($\|\boldsymbol{\xi}_{\mathrm{nl}}\|_\infty \to 0$ as $\|\mathbf{v}_i\| \to 0$), the stability is asymptotic.
\end{theorem}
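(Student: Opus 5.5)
The plan is to reduce Theorem~\ref{thm:quad_nonlin} to Lemma~\ref{lem:iss_perturbation}, following the template of the proof of Theorem~\ref{thm:quad_lin}. The only new ingredient is the residual $\boldsymbol{\xi}_{\mathrm{nl}}$. It contains a decaying transient, so it is not uniformly small from $t=0$.

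\emph{Step 1: express the realized input as a perturbed LQR feedback.} Under the flat inversion~\eqref{eq:flat_inversion}, the DMPC output is $\mathbf{u}_{\mathrm{nom}} = \mathbf{u}_i^{\mathrm{lqr}} + \boldsymbol{\rho}_i(H) + \eta_i$ by Proposition~\ref{prop:perturbed_lqr}. The realized acceleration is therefore $\ddot{\mathbf{p}}_i = \mathbf{u}_i^{\mathrm{lqr}} + \mathbf{e}_i$ with $\mathbf{e}_i = \boldsymbol{\rho}_i(H) + \eta_i + \boldsymbol{\xi}_{\mathrm{nl},i}$. Here the inversion mismatch in~\eqref{eq:xi_nl_def} vanishes identically. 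First I verify the operating-region hypothesis of Lemma~\ref{lem:iss_attitude}. The commanded tilt is at most $\arcsin(U_{\max}/g)$. I then choose the initial attitude error small enough that $\beta_a$ plus $\gamma_a$ keeps $\|\boldsymbol{\Theta}\|\le\Theta_{\max}<\pi/2$ for all $t$. Given this, bound~\eqref{eq:xi_nl_bound} holds along the whole trajectory. Feasibility carries over as in the linearized case, with $U_{\max}$ replaced by $\tilde U_{\max}=U_{\max}-\|\boldsymbol{\xi}_{\mathrm{nl}}\|_\infty$. The horizon enters only through $\bar\rho$, which is uniform over $[H_{\min},H_{\max}]$, so the variable horizon carries over exactly as in Theorem~\ref{thm:var_guarantees}.

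\emph{Step 2: handle the transient.} This is the step I expect to be the main obstacle, because Lemma~\ref{lem:iss_perturbation} is stated with a constant bound $\bar e$. I split time at some $T_0$ after which $\bar\xi_{\mathrm{nl}}(\|\boldsymbol{\Theta}(0)\|,t)\le\epsilon$ for a chosen $\epsilon>0$. On $[0,T_0]$, I use that the state remains in the compact constraint-admissible set. This follows from speed and airspace constraints enforced via feasibility, and the perturbation is bounded there, so $V$ stays bounded and there is no finite escape. On $[T_0,\infty)$, I apply Lemma~\ref{lem:iss_perturbation} with $\bar e\le\bar\rho+C\beta+\|\boldsymbol{\xi}_{\mathrm{nl}}\|_\infty+\epsilon$. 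The condition of Theorem~\ref{thm:quad_lin} with $\bar\xi\to\|\boldsymbol{\xi}_{\mathrm{nl}}\|_\infty$ (with slack absorbing $\epsilon$) keeps velocity dissipation dominant. Inequality~\eqref{eq:iss_quad} then gives $\dot V<0$ outside an $\mathcal{O}(\bar e)$ neighborhood. Because $\boldsymbol{\rho}_i,\eta_i$ vanish near equilibrium, the residual part of the ultimate bound is $\mathcal{O}(\|\boldsymbol{\xi}_{\mathrm{nl}}\|_\infty)$, and letting $\epsilon\to0$ gives practical stability with this radius. An alternative I would try if the splitting is awkward is to fold the $\mathcal{KL}$ transient directly into an ISS estimate for $V$, via the comparison lemma on $\dot V\le-\lambda V+c\|\mathbf{e}(t)\|^2$.

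\emph{Step 3: asymptotic case.} Assume $\|\boldsymbol{\xi}_{\mathrm{nl}}\|_\infty\to0$ as $\|\mathbf{v}_i\|\to0$. As the state enters shrinking neighborhoods, the steady-state term $(g+U_{\max})L_R\gamma_a(\cdot)$ contracts, with $\gamma_a$ linear and the command rate tending to zero. All three perturbation components then vanish near equilibrium. The final clause of Lemma~\ref{lem:iss_perturbation} yields asymptotic stability, by the same shrinking-neighborhood argument as in Corollary~\ref{cor:dmpc_stability}.
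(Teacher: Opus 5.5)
Your proposal follows the paper's proof: the same decomposition $\mathbf{e}_i = \boldsymbol{\rho}_i(H) + \eta_i + \boldsymbol{\xi}_{\mathrm{nl},i}$, the bound $\bar e \leq \bar\rho + C\beta + \|\boldsymbol{\xi}_{\mathrm{nl}}\|_\infty$ fed into \autoref{lem:iss_perturbation} exactly as in \autoref{thm:quad_lin}, and for the asymptotic case the vanishing of all three residuals near equilibrium as the command rate goes to zero. Your explicit time-splitting for the decaying transient and your in-proof check of the operating region add care that the paper leaves implicit; the paper simply absorbs the transient into $\|\boldsymbol{\xi}_{\mathrm{nl}}\|_\infty$ and closes the asymptotic case by citing the ISS cascade argument rather than the last clause of the lemma.
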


\begin{proof}
The feedback deviation is $\mathbf{e}_i = \boldsymbol{\rho}_i(H) + \eta_i + \boldsymbol{\xi}_{\mathrm{nl},i}$, so $\bar e \leq \bar\rho + C\beta + \|\boldsymbol{\xi}_{\mathrm{nl}}\|_\infty$ and \autoref{lem:iss_perturbation} gives~\eqref{eq:iss_quad}, i.e., practical asymptotic stability with ultimate bound $\mathcal{O}(\bar e)$. The structure is identical to \autoref{thm:quad_lin} with $\boldsymbol{\xi}_{\mathrm{nl}}$ in place of $\boldsymbol{\xi}$. Near equilibrium, $\|\mathbf{v}_i\| \to 0$ forces $\|\dot{\boldsymbol{\Theta}}_{\mathrm{cmd}}\| \to 0$, hence $\|\boldsymbol{\xi}_{\mathrm{nl}}\| \to 0$ by~\eqref{eq:xi_nl_bound}. The perturbation is self-extinguishing, and exact asymptotic stability is recovered by the cascade argument for ISS systems~\cite[Ch.~4]{khalil_nonlinear_2002}.
\end{proof}

The rigid-body model~\eqref{eq:nonlin_trans}--\eqref{eq:nonlin_rot} is deliberately the minimal member of its model class: it isolates the two effects that separate a quadrotor from the double integrator, and every higher-fidelity effect enters the analysis similarly. Experimentally identified copter models refine it with first-order motor lag, aerodynamic drag, and gyroscopic rotor torques~\cite{2019quadrotor, lupashin2014platform}. Under the cascaded controller, each of these effects contributes a further bounded term to $\boldsymbol{\xi}_{\mathrm{nl}}$ in~\eqref{eq:xi_nl_def}, the motor lag by the same time-scale separation as the attitude loop and the drag through the speed bound $V_{\max}$, so only the constant $\bar e$ of \autoref{tab:model_classes} changes, not the argument. That real inner loops deliver such residuals, small and certifiable in advance, is documented across control architectures: a cascaded flatness-based design tracks within a precomputed bound under wind disturbances~\cite{2019quadrotor}, incremental nonlinear dynamic inversion holds centimeter-level errors in aggressive flight~\cite{tal2021indi}, and drag-compensated flatness sustains accurate high-speed tracking~\cite{faessler2018drag}. Any of these controllers satisfies the hypothesis of \autoref{lem:iss_attitude} and can serve as the inner loop, with the DMPC as supervisor unchanged.

\begin{table}[t]
\centering
\caption{Feedback perturbation bound across the three model classes.}
\label{tab:model_classes}
\begin{tabular}{lc}
    \toprule
    Model & $\bar e$ \\
    \midrule
    Double integrator & $\bar\rho + C\beta$ \\
    Linearized quadrotor & $\bar\rho + C\beta + \bar\xi$ \\
    Nonlinear quadrotor & $\bar\rho + C\beta + \|\boldsymbol{\xi}_{\mathrm{nl}}\|_\infty$ \\
    \bottomrule
\end{tabular}
\end{table}

The extension relies on the receding-horizon framework rather than on the specific plant. MPC for nonlinear systems is stable provided the optimization is feasible at every step (guaranteed by \autoref{thm:var_guarantees}), stability is enforced through a terminal ingredient~\cite{MAYNE2000789} or a sufficiently long horizon~\cite{ROSTAMI2023100881} (here $H \geq \tilde H_{\min}^{\mathrm{feas}}$), and the system is controllable near the equilibrium. The linearized quadrotor is controllable by construction, and the nonlinear quadrotor inherits local controllability from its linearization, so the framework applies, with our perturbation analysis supplying the explicit convergence rate and parameter bounds.

\begin{corollary}[Variable Horizon under Quadrotor Dynamics]\label{cor:var_quad}
Under the conditions of \autoref{thm:quad_lin} and \autoref{thm:quad_nonlin}, respectively, \autoref{ass:fixed_feas}, and \autoref{ass:hmin} with $H_{\min}^{\mathrm{feas}}$ replaced by $\tilde H_{\min}^{\mathrm{feas}}$, the variable-horizon DMPC of \autoref{def:horizon} retains recursive feasibility and (practical) asymptotic stability under both quadrotor models.
\end{corollary}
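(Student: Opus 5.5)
The plan is to reproduce the proof of \autoref{thm:var_guarantees} with the quadrotor perturbation terms added to the feedback deviation. That proof used only two properties: every selected horizon lies in the certified band, and the feedback deviation bound $\bar e$ is uniform over that band. Both will be re-established for the cascaded models, with $U_{\max}$ replaced by $\tilde U_{\max}$ and $\bar e$ enlarged by the inner-loop residual. The argument runs separately for the linearized model, with residual $\bar\xi$, and the nonlinear model, with residual $\|\boldsymbol{\xi}_{\mathrm{nl}}\|_\infty$, but the two cases are otherwise identical.

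\emph{Feasibility.} Under the cascaded controller the outer loop sees the double integrator~\eqref{eq:outer_loop} with effective acceleration $\tilde U_{\max}$. Since $H_{\min} \geq \tilde H_{\min}^{\mathrm{feas}}$, \autoref{def:horizon} yields every $H_i(k) \in [\tilde H_{\min}^{\mathrm{feas}}, H_{\max}]$. \autoref{ass:fixed_feas}, read with the reduced acceleration as stated after~\eqref{eq:outer_loop}, then gives per-step feasibility at each selected horizon. The truncation argument of \autoref{lem:braking} is purely stage-wise and model-independent. Hence the feasibility half of the proof of \autoref{thm:var_guarantees} transfers verbatim: truncation when the horizon shrinks, padding of the warm start when it grows, and the per-step Gauss--Seidel contraction.

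\emph{Stability.} On each sampling interval the horizon is a fixed $H_i(k)$, so the realized feedback has deviation
\[
\mathbf{e}_i = \boldsymbol{\rho}_i(H_i(k)) + \eta_i + \boldsymbol{\xi}_i .
\]
Here $\boldsymbol{\xi}_i$ is $\boldsymbol{\xi}$ in the linearized case and $\boldsymbol{\xi}_{\mathrm{nl},i}$ in the nonlinear case. The residual bound $\bar\rho$ is uniform in $H$ over the compact constraint-admissible set. The inner-loop bounds $\bar\xi$ and~\eqref{eq:xi_nl_bound} do not involve the horizon at all, since they depend only on the attitude loop and the command rate. Therefore
\[
\bar e \leq \bar\rho + C\beta + \bar\xi \quad\text{or}\quad \bar e \leq \bar\rho + C\beta + \|\boldsymbol{\xi}_{\mathrm{nl}}\|_\infty
\]
holds uniformly in $i$, $k$ and the horizon sequence, exactly as in \autoref{tab:model_classes}. \autoref{lem:iss_perturbation} then applies on every interval with this single $\bar e$ and the horizon-free $V$ of~\eqref{eq:lyapunov}. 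Because $V$ is continuous across switches, no dwell-time condition is needed, and we obtain~\eqref{eq:iss_quad} and convergence to an $\mathcal{O}(\bar e)$ neighborhood. Inside that neighborhood the vanishing arguments of \autoref{thm:quad_lin} and \autoref{thm:quad_nonlin} apply, and their ingredients are all horizon-independent: $\boldsymbol{\rho}_i, \eta_i \to 0$ by \autoref{prop:perturbed_lqr}, and $\boldsymbol{\xi}_i \to 0$ because $\|\mathbf{v}_i\| \to 0$ forces $\dot{\boldsymbol{\Theta}}_{\mathrm{cmd}} \to 0$. This gives asymptotic stability for the linearized model, and practical stability (asymptotic under exact inner-loop tracking) for the nonlinear one.

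The main obstacle I expect is the nonlinear case. There the residual bound~\eqref{eq:xi_nl_bound} contains a decaying transient, and its gain depends on $\sup_s\|\dot{\boldsymbol{\Theta}}_{\mathrm{cmd}}(s)\|$. Horizon switches could make the DMPC command, and hence $\boldsymbol{\Theta}_{\mathrm{cmd}}$, jump at switching instants, which could inflate the command rate. I would handle this first by noting that $\mathbf{u}_{\mathrm{nom}}$ stays in the ball of radius $U_{\max}$ regardless of horizon. The flatness map~\eqref{eq:flat_inversion} then bounds the tilt command uniformly, and the inner loop tracks a bounded sample-and-hold command. The ISS estimate of \autoref{lem:iss_attitude} can therefore be restarted at each sampling instant with an initial error bounded uniformly in the horizon, so the transient contributes only a horizon-independent constant to $\|\boldsymbol{\xi}_{\mathrm{nl}}\|_\infty$.
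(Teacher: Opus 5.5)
Your proposal is correct and follows the paper's route. The paper's proof simply notes that \autoref{thm:quad_lin} and \autoref{thm:quad_nonlin} go through \autoref{lem:iss_perturbation}, whose only horizon dependence is the uniformly bounded residual $\boldsymbol{\rho}_i(H)$, and that \autoref{lem:braking} holds under $\tilde U_{\max}$, so the argument of \autoref{thm:var_guarantees} applies verbatim. Your extra paragraph on command jumps in the nonlinear case addresses a point the paper passes over, since it implicitly treats $\|\boldsymbol{\xi}_{\mathrm{nl}}\|_\infty$ as horizon-free; note, though, that sample-and-hold already produces such jumps at every sampling instant under a fixed horizon, so this is a general refinement rather than something specific to the variable horizon.
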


\begin{proof}
\autoref{thm:quad_lin} and \autoref{thm:quad_nonlin} establish stability through \autoref{lem:iss_perturbation}, whose only horizon dependence is the residual $\boldsymbol{\rho}_i(H)$ of \autoref{prop:perturbed_lqr}, bounded by $\bar\rho$ uniformly over the admissible band, and the admissible-horizon feasibility of \autoref{lem:braking} holds under $\tilde U_{\max}$. The argument of \autoref{thm:var_guarantees} therefore applies verbatim.
\end{proof}

Across the three model classes, the only quantity that changes is the feedback perturbation $\bar e$ that \autoref{lem:iss_perturbation} absorbs. \autoref{tab:model_classes} lists the three bounds. Everything else carries over: the critical density $N_{\mathrm c}^{\mathrm v}$ and the variable-horizon results of \autoref{sec:theory} are inherited unchanged, and the critical growth rate retains its form with the tightened acceleration reserve, $\tilde\alpha_{\mathrm c} = 2r_{\min}\tilde U_{\max}/V_{\max}^2$. Stability is asymptotic for the double integrator and the linearized quadrotor. For the nonlinear quadrotor it is practical, becoming asymptotic as the inner-loop residual extinguishes near equilibrium (\autoref{thm:quad_nonlin}).
\section{Evaluation}\label{sec:evaluation}

We evaluate whether the conflict-predictive variable horizon achieves efficiency gains without compromising the guarantees of \autoref{sec:theory}, and how it positions the drones in relation to the contact boundary. We compare three horizon strategies under the same distributed controller: a short fixed horizon held at the minimum, $H_{\min}$, a long fixed horizon held at the maximum, $H_{\max}$, and a conflict-predictive variable horizon ranging over $[H_{\min}, H_{\max}]$. This section describes the benchmark (\autoref{ssec:setup}), then examines the cost of computing the variable horizon (\autoref{ssec:cost}), how closely it allows drones to approach one another (\autoref{ssec:separation}), and the regime in which each effect dominates (\autoref{ssec:regime}).

\subsection{Benchmark Setup}\label{ssec:setup}

The scenarios are antipodal swaps, in which every drone flies to the diametrically opposite point through the center of the airspace. This produces a dense, conflict-heavy crossing at mid-flight, preceded and followed by clear airspace. This is exactly the regime that the variable horizon targets. The matrix spans drone counts $N \in \{2, 4, 8\}$ and tight ($5^3\,\mathrm{m}^3$) and open ($20^3\,\mathrm{m}^3$) cubic airspace, giving six scenarios. The safety model is the adaptive radius of the framework with a minimum radius of $r_{\min} = 0.4\,$m and growth rate $\alpha = 0.25$. The speed and acceleration limits are $V_{\max} = 3.0\,$m/s and $U_{\max} = 3.0\,$m/s$^2$, and the sampling time is $\Delta t = 0.1\,$s. These values satisfy the stability bound $\alpha < \alpha_{\mathrm{c}} = 2 r_{\min} U_{\max}/V_{\max}^2 = 0.27$, so the experiments instantiate the system analyzed in \autoref{sec:theory}. \autoref{fig:trajectories} gives an overview of the resulting closed-loop motion. The swaps are fully three-dimensional. In the open scenarios the drones hold their direct lines and deviate only around the central crossing, whereas the tight scenarios keep them on detours for most of the flight.

\begin{figure}[t]
    \centering
    \includegraphics[width=\columnwidth]{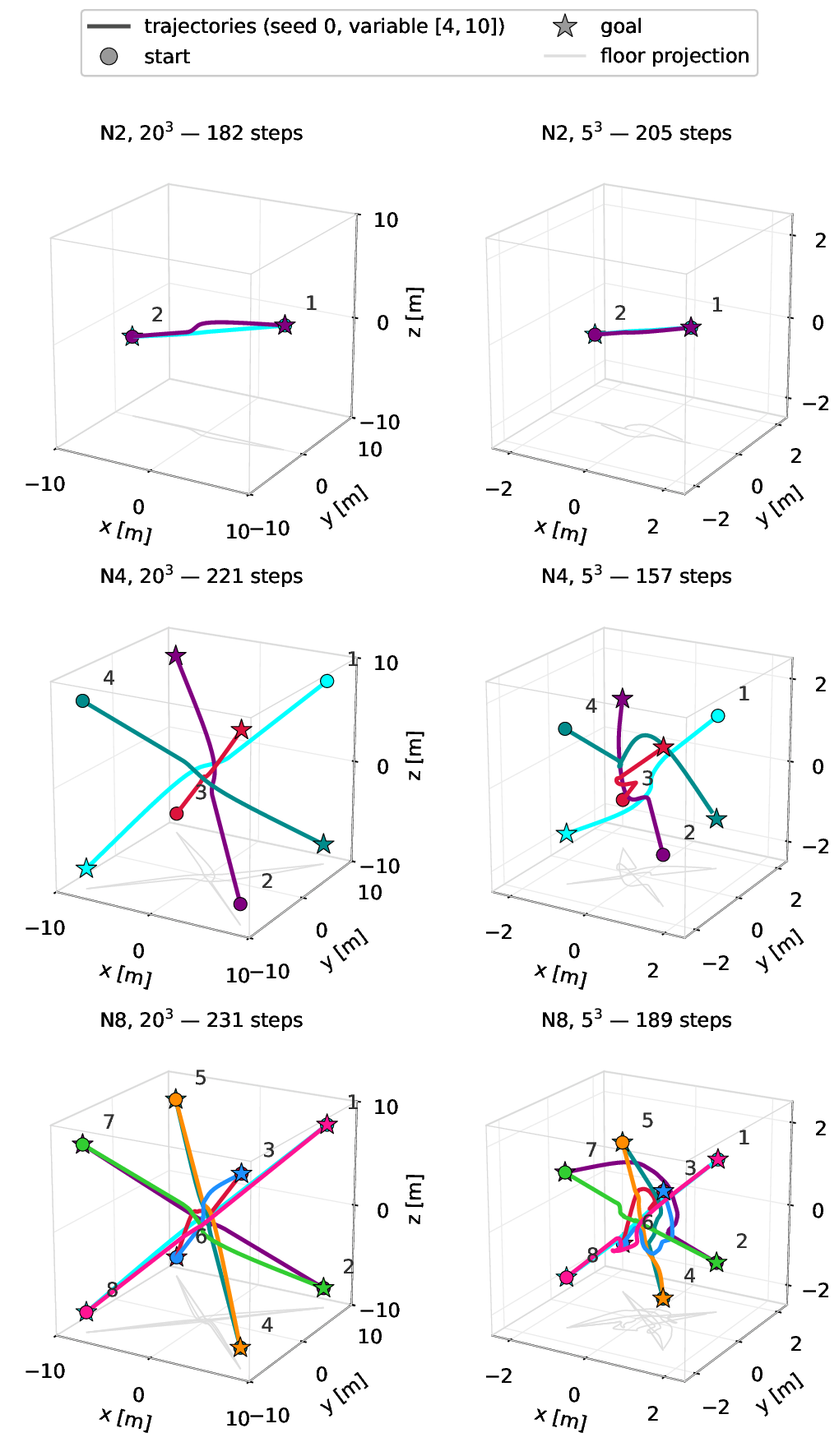}
    \caption{Closed-loop trajectories under the variable horizon for the six benchmark scenarios, in open ($20^3\,\mathrm{m}^3$, top) and tight ($5^3\,\mathrm{m}^3$, bottom) airspace. Circles mark the start positions, stars the goals, and the gray curves on the floor are the horizontal projections. One representative seed is shown per scenario, since overlaying all $20$ would obscure the individual paths.}
    \label{fig:trajectories}
\end{figure}

The variable horizon ranges from $H_{\min} = 4$ to $H_{\max} = 10$, and the two fixed baselines run at its endpoints. With these parameters, the feasibility bound~\eqref{eq:hmin_feas} evaluates to $H_{\min}^{\mathrm{feas}} = \lceil\tfrac{1}{\Delta t}\sqrt{2r_{\min}\alpha/U_{\max}}\rceil = 3$. Thus, $H_{\min} = 4$ meets \autoref{ass:hmin} with a margin of one step. The conflict detector uses the history length $L = 50$, the decay constant $c_{\mathrm{d}} = 0.3$, and the speed lower bound $\nu = 0.5$. The funnel lower bound is the neighbor's adaptive radius evaluated at the fitted speed. Each scenario and strategy is run over $20$ random seeds with a per-run budget of $500\,$s, giving $360$ runs in total. Within a seed, the initial geometry is identical across strategies. We report a cost sweep that records per-step solver time, prediction time, and total computation, and separation traces that record the minimum pairwise distance over time. All timings are wall-clock under fixed parallel execution, so the evidence lies in the ratios within a scenario, not in absolute seconds.

\subsection{Computational Cost}\label{ssec:cost}

\autoref{fig:cost_curves} decomposes the per-step cost across the six scenarios, as means over the $20$ seeds: solving at the short horizon ($H_{\min}$), solving at the long horizon ($H_{\max}$), solving under the variable horizon, and the linear prediction of \autoref{ssec:prediction}. The prediction costs $0.04$ to $0.7$ ms per step, two to four orders of magnitude below the solver time at any horizon and under $0.3\%$ of the variable horizon's own per-step solver time. The long-horizon solver time rises steeply with conflict intensity, from about $0.1\,$s per step in the two-drone scenarios to $10.7\,$s in the eight-drone open scenario, where it exhausts the $500\,$s budget within the first fifty steps.

\begin{figure}[t]
    \centering
    \includegraphics[width=\columnwidth]{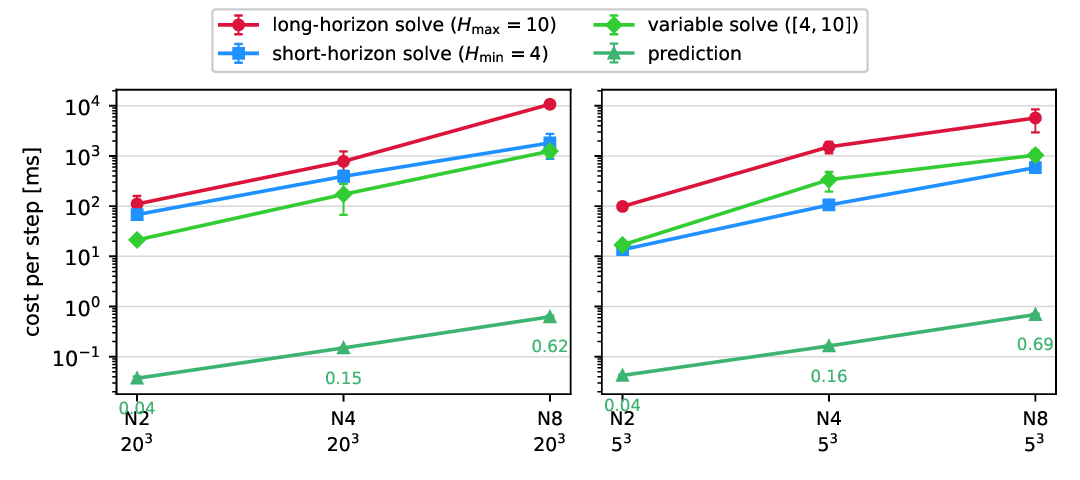}
    \caption{Per-step solver and prediction cost across the six scenarios, log scale, as means over $20$ seeds per scenario and strategy.}
    \label{fig:cost_curves}
\end{figure}

The variable horizon spends almost all steps at its lower bound: its mean selected horizon lies between $4.1$ and $4.5$ across the scenarios, over the band $[4, 10]$. Its per-step solver cost tracks the short-horizon curve and lies $4.5\times$ to $8.6\times$ below the long horizon's in every scenario, where the eight-drone open value is a lower bound because the long horizon never passes the crossing. Since the horizon ratio is only $2.5$, these per-step ratios confirm the superlinear growth of the solver cost in the horizon length.

\autoref{fig:cost} reports the total computation. The variable horizon undercuts the long fixed horizon by $2.0\times$ to $2.7\times$ in every scenario in which the latter finishes. In the eight-drone open scenario the long horizon exceeds the budget in all $20$ seeds, while the variable horizon completes every seed in $290\,$s on average. In the open scenarios it even undercuts the short fixed horizon, $38$ versus $98\,$s at four drones, because raising the horizon at the crossing resolves the conflict earlier. The variable horizon completes the mission in fewer steps than the short one, $220$ versus $248$, and spends fewer of them in expensive active constraints, which lowers the mean per-step solver time to $172$ versus $393\,$ms. The long fixed horizon needs only $129$ steps, but at far higher per-step cost. In the tight scenarios, where the confined space keeps conflicts active throughout, the variable horizon costs up to $2.7\times$ more than the short fixed horizon, the price of its longer planning windows in this case.

\begin{figure}[t]
    \centering
    \includegraphics[width=\columnwidth]{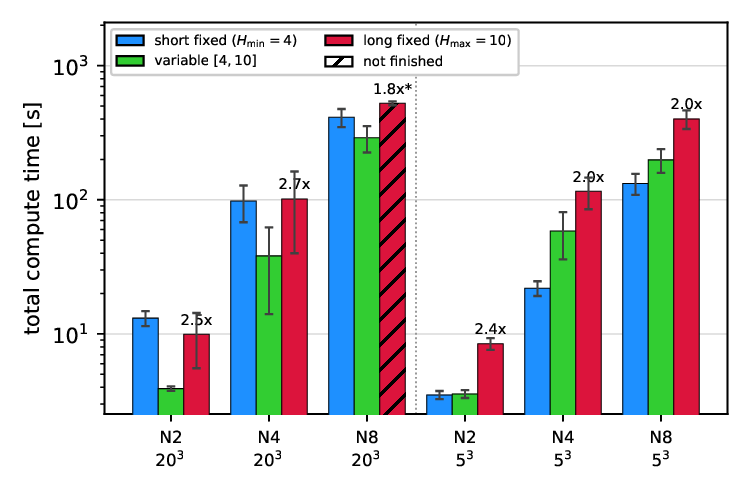}
    \caption{Total computation per scenario for the three strategies, log scale, as means over $20$ seeds. Labels give the total-time ratio of the long fixed horizon to the variable horizon. The asterisk marks the eight-drone open scenario, where the long fixed horizon exceeds the $500\,$s budget in every seed.}
    \label{fig:cost}
\end{figure}

\subsection{Separation and Safety}\label{ssec:separation}

The variable horizon never violates the separation constraint: over all $120$ of its runs, the minimum pairwise distance stays at or above the contact boundary $2r_{\min} = 0.8\,$m. The long fixed horizon likewise maintains the boundary wherever it completes. The short fixed horizon does not: in the eight-drone open scenario, $2$ of $20$ runs breach the boundary, dipping to $0.776\,$m at worst. In the tight scenarios all strategies ride the boundary through the crossing, with grazing contact at exactly $0.80\,$m. \autoref{tab:strategies} lists the worst-case closest approach per scenario over the $20$ seeds. The dagger marks the boundary breach, and the asterisk marks the long horizon's eight-drone open value, which covers only the pre-crossing phase because of the timeout.

\autoref{fig:separation} traces the minimum pairwise distance in the two four-drone scenarios, as seed means with min--max envelopes. In the open scenario the pattern is monotone in the horizon. The short fixed horizon reacts late: it stays near the boundary for the longest time, $32$ steps below $1.2\,$m on the mean trace, with a worst-case closest approach of $0.81\,$m. The long fixed horizon anticipates early and stays farthest away, $1.07\,$m at worst and no step below $1.2\,$m. The variable horizon dips almost as low as the short horizon at the densest instant, $0.86\,$m at worst, but only briefly, $3$ steps below $1.2\,$m on the mean trace. In the tight scenario the confined space forces all three strategies onto the boundary, and they differ only in cost (\autoref{ssec:cost}).

This behavior shows where the assumptions of \autoref{sec:theory} become tight. The guarantees are conditional on per-step solver feasibility (\autoref{ass:fixed_feas}), and the short fixed horizon, although it meets the feasibility bound $H_{\min}^{\mathrm{feas}} = 3$ with one step to spare, operates at exactly that margin: at eight drones its late reaction leaves conflicts the solver can no longer resolve within four steps, producing the isolated breaches. The variable horizon shares that lower bound yet avoids every breach, because lifting the horizon at a predicted crossing keeps the solver away from the margin. Prediction quality thus governs efficiency within the certified band (\autoref{cor:robustness}), and in practice it also eases the burden on the feasibility assumption on which safety rests.

\begin{figure}[t]
    \centering
    \includegraphics[width=0.85\columnwidth]{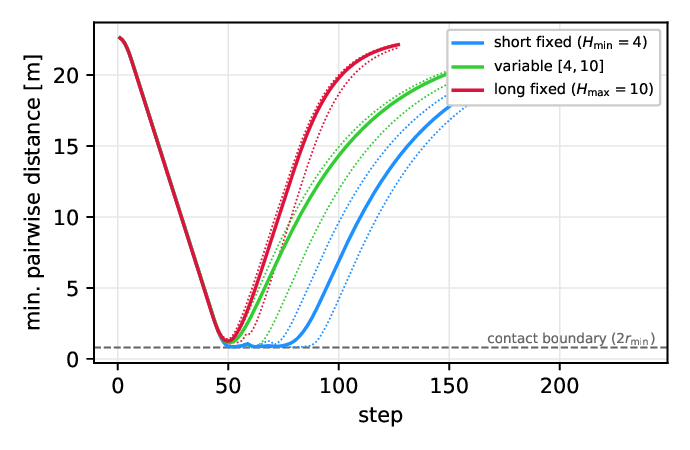}\\[2pt]
    \includegraphics[width=0.85\columnwidth]{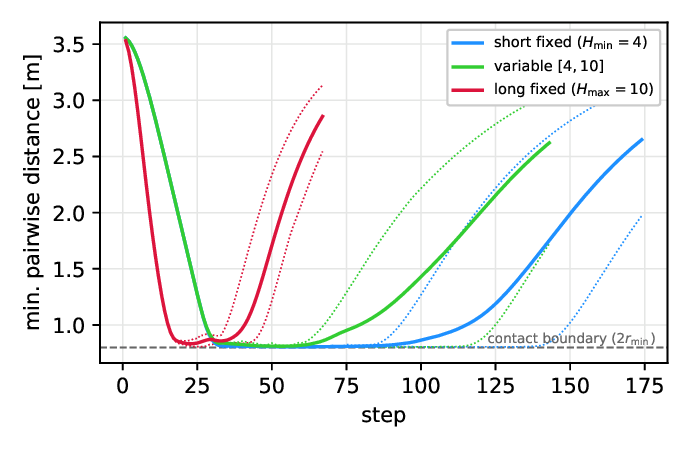}
    \caption{Minimum pairwise distance in the four-drone open scenario (top) and tight scenario (bottom): seed mean per strategy with min--max envelope over $20$ seeds. The contact boundary is $2r_{\min} = 0.8\,$m.}
    \label{fig:separation}
\end{figure}

\begin{table}[t]
\centering
\caption{Worst-case closest approach [m] over $20$ seeds per scenario.}
\label{tab:strategies}
\begin{tabular}{l c c c}
    \toprule
    Scenario & short fixed & variable & long fixed \\
    \midrule
    $N{=}2$, $20^3$ & $0.84$ & $1.12$ & $1.34$ \\
    $N{=}4$, $20^3$ & $0.81$ & $0.86$ & $1.07$ \\
    $N{=}8$, $20^3$ & $0.78^\dagger$ & $0.81$ & $0.83^*$ \\
    $N{=}2$, $5^3$  & $0.80$ & $0.80$ & $0.80$ \\
    $N{=}4$, $5^3$  & $0.80$ & $0.80$ & $0.81$ \\
    $N{=}8$, $5^3$  & $0.80$ & $0.80$ & $0.81$ \\
    \bottomrule
\end{tabular}
\end{table}

\subsection{Operating Regime}\label{ssec:regime}

The outcome of the comparison depends on the baseline and on the operating regime. Against the long fixed horizon, the variable horizon dominates everywhere: it is $2.0\times$ to $2.7\times$ faster in total wherever the long horizon finishes, completes the eight-drone open swap that the long horizon cannot, and concedes separation margin only where margin is abundant. Against the short fixed horizon, the picture splits by airspace. In the open scenarios, the common case in the large-volume operations that motivate the scheme, the variable horizon is both faster and safer: it undercuts the short horizon's total time and maintains the boundary that the short horizon breaches at eight drones. In the tight scenarios every strategy sits on the boundary and the short horizon is cheaper, so the variable horizon's premium buys anticipation that the confined geometry cannot convert into margin. It is thus the only strategy that is tractable and boundary-respecting across the whole benchmark matrix.

\section{Discussion and Conclusion}\label{sec:conclusion}

We present a conflict-predictive variable horizon for distributed MPC. Each drone sizes its planning window to cover its farthest predicted conflict, within a fixed band. The horizon is the only quantity exposed to an otherwise unchanged controller, so the cost of a long horizon is paid only when a conflict is anticipated. We prove that recursive feasibility and asymptotic stability are preserved for every horizon in that band under a single computable condition on its lower bound (\autoref{ass:hmin}, \autoref{thm:var_guarantees}), and that the same perturbation argument carries the guarantees to the linearized quadrotor and, as practical stability, to the full nonlinear one (\autoref{cor:var_quad}). On dense antipodal-swap benchmarks the variable horizon undercuts a long fixed horizon in both per-step and total computation, completes swaps that the long horizon cannot finish within the time budget, and holds separation where a short fixed horizon of comparable per-step cost loses it. To the best of our knowledge, this is the first variable-horizon DMPC whose feasibility and stability guarantees are independent of the prediction that drives the horizon.

The horizon thereby turns from a fixed budget into a resource allocated on demand. The capacity analysis we build on treats it as exactly that budget, in the sense that a horizon $H$ resolves up to $\kappa(H)$ sequential conflicts~\cite{drones10020139}. The variable horizon spends it where conflicts are predicted and conserves it elsewhere, without weakening the density limits, which are purely geometric. Unlike learned and chance-constrained schemes, in which the forecast enters the avoidance constraint and prediction accuracy is therefore safety-critical, our construction keeps the predictor out of the safety argument altogether. A better predictor improves only efficiency, and a worse one cannot invalidate the certified band (\autoref{cor:robustness}). Beyond quadrotors, the construction covers any vehicle whose translational dynamics reduce to a double integrator perturbed by a bounded residual under a stabilizing inner loop, since the combined perturbation $\bar e$ need only meet the dissipation condition of \autoref{lem:iss_perturbation}.

The scheme has limitations, and each points to a concrete extension. The linear predictor loses accuracy against strongly maneuvering neighbors, which the funnel mitigates but does not remove. A learned predictor at the same funnel interface would sharpen anticipation without touching the guarantees. At the densest crossing the horizon collapses to its lower bound and the drones pass closer to the contact boundary than under a long fixed horizon, although they never breach it and spend far less time near it than under a short one. A maneuver-reserve bound that lifts the horizon while a conflict is detected would keep them farther from the boundary without raising $H_{\min}$. The nonlinear extension assumes an inner loop of sufficient bandwidth and an attitude error small enough for the linearization of $\mathbf{R}(\boldsymbol{\Theta})$ to stay valid, which excludes aggressive aerobatic flight. Large excursions would require a geometric controller on $SO(3)$ with almost-global stability~\cite{lee2010geometric}. Finally, the evaluation is in simulation, and its wall-clock timings are informative as ratios within a scenario rather than as absolute figures, so hardware experiments remain to be done.

The key question we leave open is stated in \autoref{rem:beta_relax}. The variable horizon appears to weaken the coupling between neighboring subproblems, because it resolves conflicts earlier and with less constraint activation. A proof of that link would turn the contraction condition from something the variable horizon preserves into something it improves.

\section*{Funding}

The research project was funded by ``The Ministry of the Environment, Nature Conservation and Transport of the State of North Rhine-Westphalia'' in Germany and co-financed by the European Union for the research project ``SIDDA - Sustainable Intermodal Drone Delivery Airline'' with grant number IN-ML-1-013b.
\section*{Conflict of Interest}
The authors declare no conflicts of interest. They have no known competing financial interests or personal relationships that could have appeared to influence the work reported in this manuscript. The funding source had no role in the design of the study, in the collection, analysis, or interpretation of data, in the writing of the manuscript, or in the decision to submit it for publication.

\bibliographystyle{IEEEtran}
\bibliography{bib}

\end{document}